\newcommand{\muca}{\mu_{\scalebox{0.6}{\ensuremath{\mathcal A}}}}
\newcommand{\Constr}{\mathcal{C}}
\newcommand{\bw}{\boldsymbol{w}}
\newcommand{\bx}{\boldsymbol{x}}
\newcommand{\by}{\boldsymbol{y}}
\newcommand{\bz}{\boldsymbol{z}}
\newcommand{\dham}{\text{d}_{\scalebox{0.6}{H}}}
\newcommand{\mbf}[1]{\mathbf{#1}}
\newcommand{\mrm}[1]{\mathrm{#1}}
\newcommand{\pred}{\widehat{{y}}}
\newcommand{\argmin}[1]{\underset{#1}{\mrm{argmin}} \ }
\newcommand{\reals}{\mathbb{R}}
\newcommand{\En}{\mathbb{E}}  
\newcommand{\inner}[1]{\left\langle #1 \right\rangle}
\newcommand{\ind}[1]{{\bf{1}}\left\{#1\right\}}
\newcommand{\tr}{\ensuremath{{\scriptscriptstyle\mathsf{T}}}}
\newcommand{\bepsilon}{\boldsymbol{\varepsilon}}
\newcommand\x{\mathbf{x}}
\newcommand\cA{\mathcal{A}}
\newcommand\cD{\mathcal{D}}
\newcommand\X{\mathcal{X}}
\newcommand\F{\mathcal{F}}
\newcommand\Rad{\mathscr{R}}
\newcommand{\Rel}{\mbf{Rel}}
\def\deq{\triangleq}
\newcommand{\OPT}{{\sf OPT}}
\newcommand{\SDP}{{\sf SDP}}
\theoremstyle{plain}
\newtheorem{theorem}{Theorem}
\newtheorem{lemma}[theorem]{Lemma}
\theoremstyle{definition}
\newtheorem{example}{Example}
\newtheorem{observation}{Observation}
\theoremstyle{definition}
\title{A Tutorial on Online Supervised Learning with Applications to Node Classification in Social Networks}
\author{Alexander Rakhlin \\ University of Pennsylvania \and Karthik Sridharan \\ Cornell University}
\date{\today}
\begin{document}
\maketitle

We revisit the elegant observation of T. Cover \cite{Cover65behaviour} which, perhaps, is not as well-known to the broader community as it should be. The first goal of the tutorial is to explain---through the prism of this elementary result---how to solve certain sequence prediction problems by modeling sets of solutions rather than the unknown data-generating mechanism. We extend Cover's observation in several directions and focus on  computational aspects of the proposed algorithms. The applicability of the methods is illustrated on several examples, including node classification in a network. 

The second aim of this tutorial is to demonstrate the following phenomenon: it is possible to predict as well as a combinatorial ``benchmark'' for which we have a certain multiplicative approximation algorithm, even if the exact computation of the benchmark given all the data is NP-hard. The proposed prediction methods, therefore, circumvent some of the computational difficulties associated with finding the best model given the data. These difficulties arise rather quickly when one attempts to develop a probabilistic model for graph-based or other problems with a combinatorial structure.

\section{The basics of bit prediction}
\label{sec:basics}

Consider the task of predicting an unknown sequence $\by=(y_1,\ldots,y_n)$ of $\pm1$'s in a streaming fashion. At time $t=1,\ldots,n$, a forecaster chooses $\pred_t\in\{\pm1\}$ based on the history $y_1,\ldots,y_{t-1}$  observed so far. After this prediction is made, the value $y_t$ is revealed to the forecaster. The average number of mistakes incurred on the sequence is
\begin{align}
	\frac{1}{n}\sum_{t=1}^n \ind{\pred_t\neq y_t},
\end{align}
where $\ind{S}$ is $1$ if $S$ is true, and $0$ otherwise. A randomized algorithm $\cA$ is determined by the means
\begin{align}
	\widehat{q}_t = \widehat{q}_t (y_1,\ldots,y_{t-1}) \in [-1,1], ~~~~ t=1,\ldots,n
\end{align}
of the distributions $\cA$ puts on the outcomes $\{\pm1\}$ at time $t$. The expected average number of mistakes made on the sequence $\by$ by a randomized algorithm $\cA$ is 
\begin{align}
	\label{eq:avg_error}
	\muca(\by) = \En\left[ \frac{1}{n}\sum_{t=1}^n \ind{\pred_t\neq y_t} \right],
\end{align}
where the expectation is with respect to the random choices $\pred_t$, drawn from the  distributions with means $\widehat{q}_t(y_1,\ldots,y_{t-1})$, $t=1,\ldots,n$. 

Whenever a prediction algorithm $\cA$ has low expected error on some sequence $\by$, it must be at the expense of being worse on other sequences. Why? On average over the $2^n$ sequences, the algorithm necessarily incurs an error of $1/2$. Indeed, denoting by $\bepsilon = (\varepsilon_1,\ldots,\varepsilon_n)$ a sequence of independent unbiased $\pm1$-valued (Rademacher) random variables, it holds that
\begin{align}
	\label{eq:avg_cost_over_all_seq}
	\frac{1}{2^n}\sum_{\by} \muca(\by) = \En_{\bepsilon}\En\left[ \frac{1}{n}\sum_{t=1}^n  \ind{\pred_t\neq \varepsilon_t} \right] = \frac{1}{2}
\end{align}
by an elementary inductive calculation, keeping in mind that $\widehat{q}_t=\widehat{q}_t(\varepsilon_1,\ldots,\varepsilon_{t-1})$. As a consequence, it is impossible to compare prediction algorithms when all sequences are treated equally.

Evidently, any algorithm $\cA$ induces a function $\muca$ on the hypercube $\{\pm1\}^n$, whose average value is $1/2$. Cover \cite{Cover65behaviour} asked the converse: given a function $\phi:\{\pm1\}^n\to \reals$, is there an algorithm $\cA$ with the property
\begin{align}
	\label{eq:achievable}
	\forall \by,~~~ \muca(\by) = \phi(\by).
\end{align}
In words, if we specify the average number of mistakes we are willing to tolerate for each sequence, is there an algorithm that achieves the goal?
If such an algorithm exists, we shall say that $\phi$ is \emph{achievable}. Let us call $\phi$ \emph{stable} if
\begin{align}
	\label{eq:stab}
	|\phi(\ldots,+1,\ldots)-\phi(\ldots,-1,\ldots)|\leq \frac{1}{n}
\end{align}
for any coordinate, keeping the rest fixed. Cover's observation \cite{Cover65behaviour} is now summarized as
\begin{lemma}
	\label{lem:cover}
	Suppose $\phi:\{\pm1\}^n\to\reals$ is stable. Then 
	\begin{center}
		$\phi$ is achievable if and only if $\En\phi = 1/2$,
	\end{center} where the expectation is under the \emph{uniform} distribution.
\end{lemma}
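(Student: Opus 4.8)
The statement splits into two directions, and the forward (``only if'') direction is immediate: if $\phi=\muca$ for some algorithm $\cA$, then averaging over all $2^n$ sequences and invoking the identity \eqref{eq:avg_cost_over_all_seq} gives $\En\phi = \frac{1}{2^n}\sum_{\by}\muca(\by) = 1/2$. So the content lies entirely in the converse, where I must \emph{construct} an algorithm realizing a given stable $\phi$ with $\En\phi=1/2$.

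The plan is to first rewrite the target in a form that exposes the means $\widehat{q}_t$. Since $\pred_t$ is drawn with mean $\widehat{q}_t$, the conditional mistake probability at step $t$ (with the prefix $y_1,\ldots,y_{t-1}$ fixed) is $\Prob(\pred_t\neq y_t) = \frac{1-y_t\widehat{q}_t}{2}$, uniformly in the sign of $y_t$. Hence for \emph{any} algorithm
\begin{align}
	\muca(\by) = \frac12 - \frac{1}{2n}\sum_{t=1}^n y_t\,\widehat{q}_t(y_1,\ldots,y_{t-1}),
\end{align}
and the task reduces to choosing prefix-measurable $\widehat{q}_t\in[-1,1]$ so that the right-hand side equals $\phi(\by)$ for every $\by$.

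To find the $\widehat{q}_t$, I would form the Doob-type martingale obtained by averaging $\phi$ over the unrevealed bits: set $\Phi_t(y_1,\ldots,y_t) = \En_{\varepsilon_{t+1},\ldots,\varepsilon_n}\phi(y_1,\ldots,y_t,\varepsilon_{t+1},\ldots,\varepsilon_n)$ under independent Rademacher $\varepsilon_i$, so that $\Phi_n=\phi$ and $\Phi_0=\En\phi=1/2$. Because $\Phi_{t-1}(y_1,\ldots,y_{t-1})$ is the average of $\Phi_t(y_1,\ldots,y_{t-1},+1)$ and $\Phi_t(y_1,\ldots,y_{t-1},-1)$, each one-step increment equals $\Phi_t-\Phi_{t-1} = \frac{y_t}{2}\bigl[\Phi_t(y_1,\ldots,y_{t-1},+1)-\Phi_t(y_1,\ldots,y_{t-1},-1)\bigr]$. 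Telescoping $\Phi_n-\Phi_0$ and matching term by term with the display above then forces the definition $\widehat{q}_t(y_1,\ldots,y_{t-1}) = -n\bigl[\Phi_t(y_1,\ldots,y_{t-1},+1)-\Phi_t(y_1,\ldots,y_{t-1},-1)\bigr]$, which by construction makes $\muca=\phi$.

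The one thing left to verify---and the only place stability enters---is that these $\widehat{q}_t$ are legitimate means, i.e. $\widehat{q}_t\in[-1,1]$. This amounts to the bound $\bigl|\Phi_t(y_1,\ldots,y_{t-1},+1)-\Phi_t(y_1,\ldots,y_{t-1},-1)\bigr|\le 1/n$. Writing the difference as an expectation over $\varepsilon_{t+1},\ldots,\varepsilon_n$ of $\phi(y_1,\ldots,y_{t-1},+1,\varepsilon_{t+1},\ldots)-\phi(y_1,\ldots,y_{t-1},-1,\varepsilon_{t+1},\ldots)$, each integrand is a single-coordinate difference of $\phi$, bounded by $1/n$ via \eqref{eq:stab}; averaging preserves the bound. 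I expect this final estimate to be the crux: it is precisely the stability hypothesis that guarantees the extracted martingale increments correspond to a valid randomized strategy rather than to out-of-range ``probabilities.''
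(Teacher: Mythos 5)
Your proof is correct, and it reaches the result by a more direct route than the paper's. The paper proves the general $k$-ary statement (Lemma~\ref{lem:cover_multi}) by backward induction on a potential $\Rel_t$ (essentially the conditional expectation of $-\phi$ over a uniformly random future, shifted by a constant), showing at each round that an equalizing choice of $q_t$---found by a water-filling argument over the simplex $\Delta_k$---closes the recursion; Lemma~\ref{lem:cover} is then read off as the case $k=2$, with the equalizer reducing to \eqref{eq:algo_2}. You instead exploit the binary structure to write the expected error of \emph{any} strategy in closed form, $\muca(\by)=\frac12-\frac{1}{2n}\sum_{t} y_t\,\widehat q_t(y_{1:t-1})$, which turns achievability into a coefficient-matching problem; the Doob martingale $\Phi_t$ of $\phi$ solves it by telescoping, and stability \eqref{eq:stab} is exactly what certifies $\widehat q_t\in[-1,1]$. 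The two arguments construct the same algorithm (your $\widehat q_t$ is precisely $q_t^*$ in \eqref{eq:algo_2}) and the same potential up to an affine transform, but yours needs no minimax reasoning and no induction---the exact identity $\muca=\phi$ falls out globally---at the price of being tied to $k=2$: for $k>2$ the per-round expected loss is not an affine function of a single scalar parameter, so the explicit linear formula for $\muca$ is unavailable and the paper's per-round equalization/water-filling over $\Delta_k$ (cf.~\eqref{eq:minm2}) becomes necessary. Both directions of the equivalence are handled correctly in your write-up, the "only if" part following from \eqref{eq:avg_cost_over_all_seq} just as in the paper.
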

That is, for any function $\phi$ that does not change too fast along any edge of the hypercube, there exists an algorithm that attains the average number of mistakes given by $\phi$ if and only if $\phi$ is $1/2$ on average over all $2^n$ sequences. 

As an immediate consequence, for any stable $\phi$,  $\En\phi\geq 1/2$ is equivalent to existence of an algorithm with $$\forall \by,~~~ \muca(\by)\leq \phi(\by).$$ This latter version of the Lemma  will be used in the sequel, and we shall say that $\phi$ is achievable even if \eqref{eq:achievable} holds with an inequality.

Perhaps, it is worth emphasizing the following message of the lemma:
\begin{quote}
	\sf
	Existence of a forecasting strategy with a given mistake bound for an arbitrary sequence can be verified by checking a probabilistic inequality.
\end{quote}
The proof of the more general multi-class statement (Lemma~\ref{lem:cover_multi}) appears in the appendix; it  uses backward induction, and may be viewed as a ``potential function'' argument. 

\begin{example}
	\label{ex:1}
	Let $\bar{\by}=\frac{1}{n}\sum_{t=1}^n \ind{y_i=1}$ denote the proportion of $+1$'s in the sequence. Take $$\phi(\by)=\min\{\bar{\by}, 1-\bar{\by}\} + Cn^{-1/2}.$$
	It is an exercise to show that the mean of this function with respect to the uniform distribution is at least $1/2$ for an appropriate constant $C$, and that $\phi$ is stable. Hence, \emph{there exists} a randomized prediction algorithm $\cA$ that takes advantage of imbalanced sequences, in the sense that
\begin{align}
	\label{eq:imbalanced}
	\forall \by,~~~ \muca(\by)\leq \min\{\bar{\by}, 1-\bar{\by}\} + Cn^{-1/2}.
\end{align}
For instance, if the sequence $\by$ ends up having $30\%$ of $1$'s, the algorithm, in expectation, will incur roughly $30\%$ proportion of errors (the extra term $Cn^{-1/2}$ is small for large enough $n$). Notably, the mistake guarantee holds for \emph{any} sequence without any stochasticity assumption on its nature\footnote{In \cite{Blackwell95}, D. Blackwell draws parallels between an almost sure version of \eqref{eq:imbalanced} (based on ``approachability'') and the corresponding i.i.d. statement.}
 ~and the imbalance of the sequence need not be known until the very end of the $n$ rounds. The existence of such a prediction strategy may seem rather surprising, and for the intrigued reader that attempts to solve this problem, let us give a hint: no deterministic method will work.
\end{example}

\section{Modeling solutions through $\phi$}

As we have seen, there is no algorithm that can predict sequences uniformly better than another algorithm. Thankfully, we do not care about \emph{all} sequences. A typical prediction problem has some structure that informs us of the sequences we should focus on. The structure is often captured through a stochastic description of the generative process, such as an i.i.d. or an  autoregressive assumption. The stochastic assumption, however, may not be justified in applications that involve complex interactions and time dependencies, such as in the social network example below. 

The approach in this tutorial is different: we provide a non-stochastic description of the ``prior knowledge'' via the function $\phi$. The function specifies the expected proportion of mistakes we are willing to tolerate on each sequence. We tilt $\phi$ down towards the sequences we care about, at the expense of making it larger on some other sequences that we are unlikely to see anyway. Lemma~\ref{lem:cover} guarantees existence of a prediction strategy with proportion of mistakes given by $\phi$, as long as $\phi$ is stable and at least $1/2$ on average. Furthermore, given $\phi$, the algorithm is simple to state, as we will see below.

In 1950's, David Hagelbarger \cite{hagelbarger1956seer} and Claude Shannon \cite{shannon1953mind} at the Bell Labs built the so-called ``mind reading machines'' to play the game of matching pennies. According to some accounts,\footnote{\url{http://backup.itsoc.org/review/meir/node1.html}} the machine was consistently predicting the sequence of bits produced by untrained human players better than 50\%.\footnote{Here is a modern version of this machine: \url{http://www.mindreaderpro.appspot.com} by Y. Freund and colleagues.
} Of course, the only reason a machine can predict consistently better than chance is that humans cannot enter ``truly random'' sequences. 

How can we design such a machine? Using the approach outlined above, we would need to capture the possible patterns of behavior we might see in the sequences and encode this knowledge in $\phi$. We have already seen in Example~\ref{ex:1} how to take advantage of imbalanced sequences. Of course, this may not be the only structure available, and we shall now describe a few general approaches to building $\phi$.

The first basic construction will be called \emph{aggregation}. Suppose $\phi_1,\ldots,\phi_N$ are $N$ stable functions, each satisfying $\En\phi_i\geq 1/2$. It is then possible to show that the best-of-all aggregate
\begin{align}
	\label{eq:min_exp}
	\phi(\by) = \min_{j\in\{1,\ldots,N\}} \phi_j(\by) + C_{n,N}
\end{align}
is stable and satisfies $\En\phi\geq 1/2$ for  
$$C_{n,N} = \sqrt{\frac{c\log N}{n}},$$
with an absolute constant $c$. The penalty $C_{n,N}$ for aggregating $N$ ``experts'' depends only logarithmically on $N$, and diminishes when $n$ is large. A reader familiar with the literature on prediction with expert advice will recognize the form of $C_{n,N}$ as a \emph{regret bound} of the Exponential Weights algorithm.\footnote{In contrast to an algorithmic proof of the regret bound, we derived $C_{n,N}$ as a value necessary to ensure $\En\phi\geq 1/2$.}

Another useful (and most-studied) way to construct $\phi$ is by taking a subset $F\subseteq \{\pm1\}^n$ and letting
\begin{align}
	\label{eq:def_phi_dham}
	\phi(\by) = \dham(\by,F) + C_{n,F},
\end{align}
the normalized Hamming distance between $\by$ and the set $F$, penalized by $C_{n,F}$. Recall that the normalized Hamming distance is
$$ \dham(\by,F) \deq \min_{\bw\in F} \frac{1}{n}\sum_{t=1}^n \ind{y_t\neq w_t},$$
where $\bw=(w_1,\ldots,w_n)$. The definition in \eqref{eq:def_phi_dham} automatically ensures stability of $\phi$, and the smallest $C_{n,F}$ that guarantees $\En\phi\geq 1/2$ is
$$C_{n,F} = \frac{1}{2n}\En\max_{\bw\in F} \inner{\bepsilon,\bw} \deq \Rad(F),$$
the \emph{Rademacher averages} of the set $F$.\footnote{We include the factor $1/2$ in the definition of Rademacher averages.} By Lemma~\ref{lem:cover}, there is a randomized prediction algorithm that incurs $C_{n,F}$ proportion of mistakes on any sequence in $F$, and the performance degrades linearly with the distance to the set. 

Observe that the $\phi$ function in Example~\ref{ex:1} can be written as \eqref{eq:def_phi_dham} with $F=\{-\boldsymbol{1},\boldsymbol{1}\}$. This is the simplest nontrivial set $F$, since matching the performance of a singleton $F=\{\bw\}$ simply amounts to outputting this very sequence.

As one makes $F$ a larger subset of the hypercube, the Hamming distance from any $\by$ decreases, yet the overall penalty $C_{n,F}$ becomes larger. On the extreme of this spectrum is $F=\{\pm1\}^n$. Insisting on a small error on this set is not possible, and, indeed,  $$\Rad(\{\pm1\}^n)=\frac{1}{2n}\En\max_{\bw\in \{\pm1\}^n} \inner{\bepsilon,\bw} = \frac{1}{2},$$ 
the performance of random guessing.

The goal is now clear: for the problem at hand, we would like to define $F$ to be large enough to capture the underlying structure of solutions, yet not too large. In Section~\ref{sec:computation} we come back to this issue when discussing combinatorial relaxations of $F$.

\section{Application: node classification}
\label{sec:node}

\begin{wrapfigure}{R}{0.4\textwidth}
	\vspace{-10pt}
	\begin{center}
		\includegraphics[width=.42\textwidth]{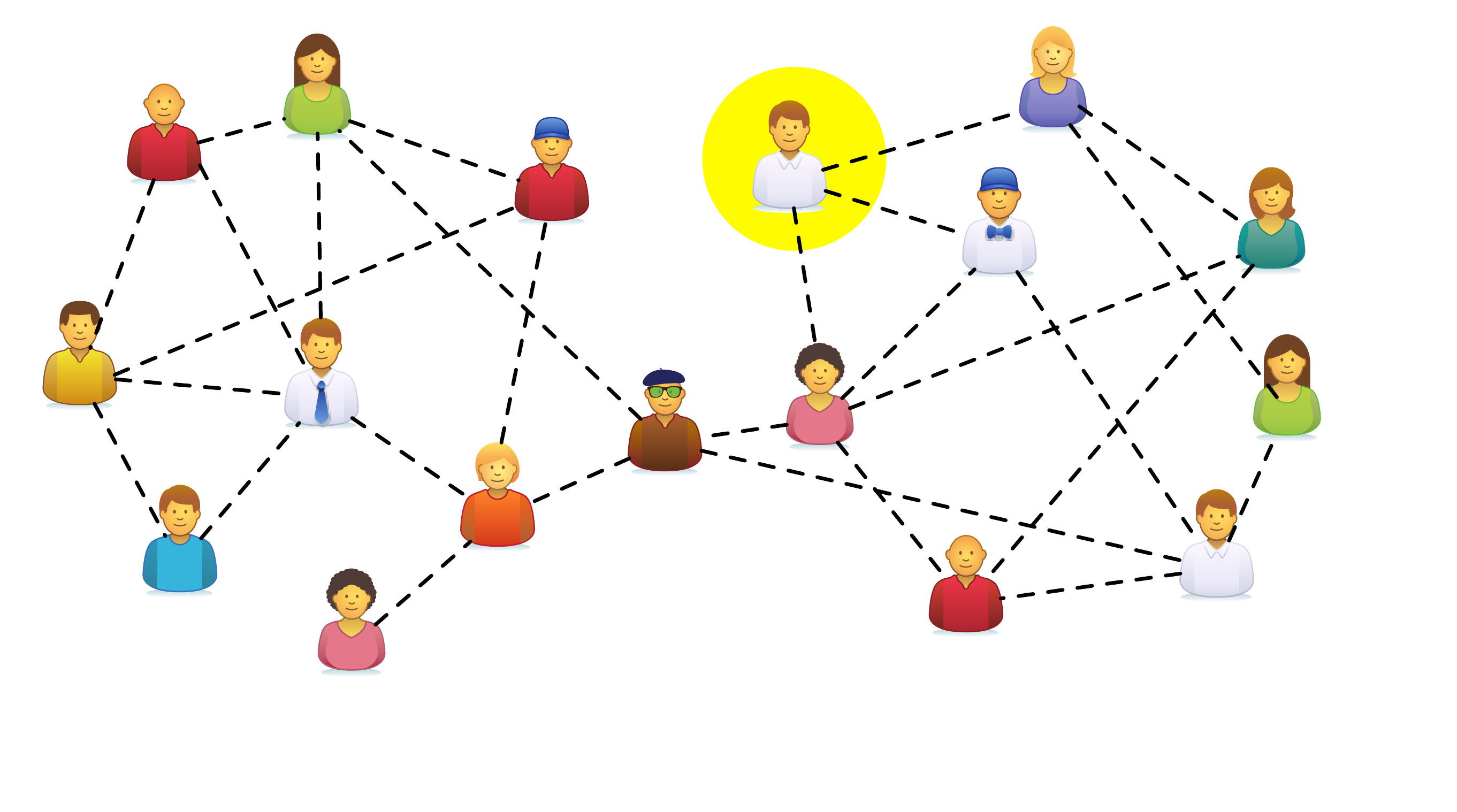}
	\end{center}
	\vspace{-30pt}
	\caption{Two-community structure.}
	\label{fig:community}
\end{wrapfigure}
We now discuss an application of Lemma~\ref{lem:cover}. Let 
$G=(V,E)$ be a known undirected graph representing a social network. At each time step $t$, a user in the network opens her Facebook page, and the system needs to decide whether to classify the user as type ``$-1$'' or ``$+1$'', say, in order to decide on an advertisement to display. We assume here that the feedback on the ``correct'' type is revealed to the system after the prediction is made. The more natural \emph{partial information} version of this problem is outside the scope of this short tutorial, and we refer the reader to \cite{rakhlin2016bistro}. 

The prediction should be made based on all the information revealed thus far (the types of users in the network that appeared before time $t$), the global graph structure, and the position of the current user in the network. In Section~\ref{sec:cov} we will also discuss the version of this problem where covariate information about the users is revealed, but at the moment assume that the graph itself provides enough signal to form a prediction. A fascinating question is: what types of $\phi$ functions capture the graph structure relevant to the problem? Below we provide two examples, only scratching the surface of what is possible. 

\subsection{Community structure}
\label{sec:community}

Suppose we have a hunch that the type of the user ($+1$ or $-1$) is correlated with the community to which she belongs. For simplicity, suppose there are two communities, more densely connected within than across (see Figure~\ref{fig:community}). To capture the idea of correlating communities and labels, we set $\phi$ to be small on labelings that assign homogenous values within each community. We make the following simplifying assumptions:  (i) $|V|=n$,  (ii) we only predict the label of each node once, and (iii) the order in which the nodes are presented is fixed (this assumption is easily removed).  Smoothness of a labeling $\by\in\{\pm1\}^n$ with respect to the graph may be computed via
\begin{align}
	\label{eq:lap1}
	\sum_{(u,v)\in E} \ind{\by(u)\neq \by(v)} = \frac{1}{4}\sum_{(u,v)\in E} (\by(u)-\by(v))^2
\end{align}
where $\by(v)\in\{\pm1\}$ is the label in $\by$ that corresponds to vertex $v\in V$. This function in \eqref{eq:lap1} counts the number of disagreements in labels at the endpoints of each edge. The value is also known as the size of the cut induced by $\by$ (the smallest possible being {\sf MinCut}). As desired, the function in \eqref{eq:lap1} gives a smaller value to the labelings that are homogenous within the communities. A more concise way to write \eqref{eq:lap1} is in terms of the graph Laplacian
\begin{align}
	\label{eq:lap2}
	\by^\tr L \by,
\end{align}
where $L = D-A$, the diagonal matrix $D$ contains degrees of the nodes, and $A$ is the adjacency matrix. 

Unfortunately, the function in \eqref{eq:lap2} is not stable. It also has an undesirable property, illustrated by the following example. The cut size is $n-1$ for a star graph, where $n-1$ nodes, labeled as $+1$, are connected to the center node, labeled as $-1$. The large value of the cut does not capture the simplicity of this labeling, which is only one bit away from being a constant $+1$.

Instead, we opt for the indirect definition \eqref{eq:def_phi_dham}. More precisely, we define
\begin{align}
	\label{eq:def_F_laplacian}
	F_\kappa = \left\{ \by\in \{\pm1\}^n ~:~ \by^\tr L \by \leq \kappa \right\}
\end{align}
for $\kappa\geq 0$, and then set
\begin{align}
	\label{eq:def_phi_Fkappa}
	\phi(\by) = \dham(\by,F_\kappa) + C_{n,F_\kappa}.
\end{align}
Parameter $\kappa$ should be larger than the value of {\sf MinCut}, for otherwise the set $F_\kappa$ is empty. The function $\phi$ has the interpretation as the proportion of vertices whose labels need to be flipped to achieve the value at most $\kappa$ for the cut, compensated by the Rademacher averages of the set $F_\kappa$. While we can give some straightforward bounds on the Rademacher averages of $F_\kappa$, the investigation of this value for various graphs, including random ones, is an interesting research question.

While {\sf MinCut} is computationally easy, the calculation of $\phi$ becomes NP-hard in general if we allow $[-1,1]$-valued weights $w_{(u,v)}$ on the edges and define $F_\kappa$ with respect to the \emph{weighted} Laplacian
\begin{align}
	\label{eq:lap2w}
	\sum_{(u,v)\in E} w_{(u,v)}(\by(u)-\by(v))^2.
\end{align}
Such a definition can be used to model trust-distrust networks, and we certainly would like to develop computationally efficient methods for this problem. Somewhat surprisingly, this is possible in certain cases, even though evaluating $\phi$ is computationally hard. See Sections~\ref{sec:rel} and \ref{sec:csp} for details.
 
\subsection{Exercise: predicting voting preferences}

Suppose $n$ individuals (connected via the known social network as in the previous example) arrive to the voting station one by one, and we are predicting whether they will vote for Grump or for Blinton. After our prediction is made, the voter reveals her true binary preference. Suppose we know the individual's place in the network and the voting preferences of the individuals observed thus far. Our task is to design an online prediction algorithm that makes as few mistakes as possible.

Suppose we have prior knowledge that Grump supporters may be  described well by a ball in the network (a ball with center $v$ and radius $r$ is the set of vertices at most $r$ hops away from $v$), but the center of this ball in the network is not known. Suppose each individual has at most $d$ friends. We leave it as an exercise to design a $\phi$ function for this prediction problem.

\section{Extension to multi-class prediction}

We now extend the result of Cover to $k$-ary outcomes, i.e. $y_t \in \{1,\ldots,k\}$. As before, the expected prediction error is given by
$$\En\left[ \frac{1}{n}\sum_{t=1}^n \ind{\pred_t\neq y_t} \right],$$
but uniformly random guessing now incurs an expected cost of $1-1/k$. By the same token, on average over the $k^n$ sequences, any algorithm must incur the expected cost of $1-1/k$. 

Let us define a couple of shorthands. We shall use the notation $a_{1:t} \deq \{a_1,\ldots,a_t\}$, $[k]\deq \{1,\ldots,k\}$,  and denote the set of probability distributions on $k$ outcomes  by $\Delta_k$.

We shall say that $\phi:[k]^n\to \reals$ is stable if for any coordinate (and holding the rest fixed),
\begin{align}
	\label{eq:stab_multiclass}
	\max_{r\in[k]}\phi(\ldots,r,\ldots) - \frac{1}{k}\sum_{i=1}^k \phi(\ldots,i,\ldots) \leq \frac{1}{nk}.
\end{align}
We now overload the notation and define a randomized forecasting strategy as a collection of distribution-valued functions of histories: $$\widehat{q}_t=\widehat{q}_t(y_{1:t-1})\in \Delta_k.$$

\begin{lemma}
	\label{lem:cover_multi}
	Suppose $\phi:[k]^n\to\reals$ is stable. Then 
	\begin{center}
		$\phi$ is achievable if and only if $\En\phi = 1-\frac{1}{k}$,
	\end{center} where the expectation is under the \emph{uniform} distribution on $[k]^n$.	
\end{lemma}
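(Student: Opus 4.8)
The plan is to prove both implications, with the forward (``only if'') direction following from a short averaging argument and the reverse (``if'') direction from an explicit backward-induction construction. For ``only if'', suppose $\phi$ is achievable by some $\cA$, so that $\muca(\by)=\phi(\by)$ for all $\by$. I would average both sides over the uniform distribution on $[k]^n$ and show $\En\muca=1-\frac1k$, generalizing the identity \eqref{eq:avg_cost_over_all_seq}. The key point is that at each round $t$ the prediction $\pred_t$ is drawn from $\widehat{q}_t(y_{1:t-1})$, which is independent of $y_t$; hence when $y_t$ is uniform on $[k]$ and independent of the history, $\Prob(\pred_t=y_t)=\frac1k$ for every distribution $\widehat{q}_t$, so each summand contributes $1-\frac1k$ in expectation. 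Thus $\En\phi=\En\muca=1-\frac1k$.

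For the substantive ``if'' direction, given stable $\phi$ with $\En\phi=1-\frac1k$ I would construct $\cA$ explicitly via the conditional-average potential
$$\phi_t(y_{1:t}) \deq \frac{1}{k^{n-t}}\sum_{y_{t+1:n}\in[k]^{n-t}} \phi(y_{1:t},y_{t+1:n}), \qquad t=0,1,\ldots,n,$$
so that $\phi_n=\phi$, $\phi_0=\En\phi=1-\frac1k$, and the tower property $\phi_{t-1}(y_{1:t-1})=\frac1k\sum_{r\in[k]}\phi_t(y_{1:t-1},r)$ holds. Reverse-engineering the update so that the per-round costs telescope against this potential, I would set the forecasting distribution to
$$\widehat{q}_t(y_{1:t-1})(r) \deq \frac1k + n\left(\frac1k\sum_{i\in[k]}\phi_t(y_{1:t-1},i) - \phi_t(y_{1:t-1},r)\right), \qquad r\in[k].$$

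Two facts then need checking. First, that $\widehat{q}_t(y_{1:t-1})$ is a genuine element of $\Delta_k$: its entries sum to $1$ by construction, while nonnegativity is precisely the inequality $\phi_t(y_{1:t-1},r)-\frac1k\sum_i\phi_t(y_{1:t-1},i)\le\frac1{nk}$. I would obtain this by showing each $\phi_t$ inherits stability in its last coordinate: for every fixed completion $y_{t+1:n}$ the hypothesis \eqref{eq:stab_multiclass} bounds the corresponding quantity by $\frac1{nk}$, and since the maximum of an average is at most the average of the maxima, averaging over completions preserves the bound. Second, the telescoping: using $\En_{\pred_t\sim\widehat{q}_t}\ind{\pred_t\neq y_t}=1-\widehat{q}_t(y_{1:t-1})(y_t)$ and substituting the formula for $\widehat{q}_t$ yields
$$\frac1n\left(1-\widehat{q}_t(y_{1:t-1})(y_t)\right) = \frac1n\left(1-\frac1k\right) + \phi_t(y_{1:t}) - \phi_{t-1}(y_{1:t-1}),$$
and summing over $t=1,\ldots,n$ collapses the potential to give $\muca(\by)=(1-\frac1k)+\phi_n(\by)-\phi_0=\phi(\by)$, using $\phi_0=1-\frac1k$.

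I expect the main obstacle to be exactly the nonnegativity step, i.e. verifying that the conditional averages $\phi_t$ inherit the stability inequality from $\phi$ with the constant $\frac1{nk}$ intact; everything else is bookkeeping once the correct form of $\widehat{q}_t$ has been guessed. The same computation also delivers the inequality version advertised after the binary lemma: if only $\En\phi\ge 1-\frac1k$, then $\muca(\by)\le\phi(\by)$ for all $\by$.
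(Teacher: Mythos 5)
Your proof is correct and is essentially the paper's own argument: your conditional-average potential $\phi_t$ is the paper's $\Rel_t$ up to an affine shift ($\Rel_t=-\phi_t+\tfrac{n-t}{n}(1-\tfrac1k)$), and your explicit formula for $\widehat{q}_t$ is exactly the equalizing (water-filling) strategy the paper derives, with stability playing the identical role of guaranteeing its nonnegativity. The only cosmetic difference is that you guess the equalizer and verify the telescoping directly, while the paper derives it as the minimizer of the per-round minimax problem; the converse averaging argument is the same in both.
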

Once again, it follows from the Lemma that $\En\phi\geq 1-\frac{1}{k}$ is equivalent to  existence of a strategy with $\muca \leq \phi$. Lemma~\ref{lem:cover} can be seen as a special case of Lemma~\ref{lem:cover_multi} for $k=2$.

\section{Computation}
\label{sec:computation}

By repeatedly referring to ``existence of a prediction strategy'' in the previous sections, we, perhaps, gave the impression that these methods are difficult to find. Thankfully, this is not the case.

\subsection{The exact algorithm}
The proofs of Lemma~\ref{lem:cover} and \ref{lem:cover_multi} are constructive, and the algorithms are easy to state. For binary prediction ($k=2$), the randomized algorithm is defined on round $t$ by the mean
	\begin{eqnarray}
		\label{eq:algo_2}
		q_t^*(y_{1:t-1}) = n\cdot \En[\phi(y_{1:t-1},-1,\varepsilon_{t+1:n})- \phi(y_{1:t-1},+1,\varepsilon_{t+1:n})]
	\end{eqnarray}
	of the distribution on the outcomes $\{\pm1\}$, where the expectation is over independent Rademacher random variables $\varepsilon_{t+1},\ldots,\varepsilon_n$. The prediction $\pred_t\in\{\pm1\}$ is then a random draw such that $\En\pred_t = q_t^*$. 
	
	The two evaluations of $\phi$ in \eqref{eq:algo_2} are performed at neighboring vertices of the hypercube differing in the $t$-th coordinate. If the function values are equal in expectation, the mean $q_t^*$ is equal to zero, which corresponds to the uniform distribution on $\{\pm1\}$. In this case, the function $\phi$ does not provide any guidance on which prediction to prefer. On the other hand, if the absolute difference is $1/n$ in expectation (the largest allowed by stability), the distribution is supported on one of the outcomes and prediction is deterministic. Between these extremes, the difference in values of $\phi$ measures the influence of $t$-th coordinate on the potential function $\phi$, where the past outcomes $y_1,\ldots,y_{t-1}$ have been fixed and future is uniformly-random. We emphasize that Rademacher random variables for future rounds are purely an outcome of the minimax analysis, as we assume no generative mechanism for the sequence $\by$.
	
For $k>2$, the randomized algorithm is defined on round $t$ by a distribution\footnote{We hope the difference in the meaning of $q_t^*$ as a distribution (for $k>2$) vs a mean of a distribution (for $k=2$) will not cause confusion.} on $\{1,\ldots,k\}$, and the optimal choice is given by
\begin{eqnarray}
	\label{eq:algo_multi}
	q_t^*(y_{1:t-1}) = \argmin{q\in \Delta_k}\max_{j\in[k]} \Big\{ -q^\tr {\boldsymbol e}_j - n\En\phi(y_{1:t-1},j,u_{t+1:n}) \Big\} 
\end{eqnarray}
where the expectation is with respect to $u_{t+1},\ldots,u_n$, each independent uniform on $[k]$. Given that the values $\En\phi(y_{1:t-1},j,u_{t+1:n})$ have been computed for each $j$, the minimization in \eqref{eq:algo_multi} is performed by a simple water-filling $O(k)$-time algorithm which can be found in the proof of Lemma~\ref{lem:cover_multi} (see also  \cite{rakhlin2016bistro}). The actual prediction is then a random draw $\pred_t\sim q_t^*$.

\subsection{Randomization}

Computing the expectations in \eqref{eq:algo_2} and \eqref{eq:algo_multi} may be costly. Thankfully, a doubly-randomized strategy works by drawing a random sequence per iteration. For binary prediction, the algorithm on round $t$ becomes: draw independent Rademacher random variables $\varepsilon_{t+1},\ldots,\varepsilon_n$, compute
	\begin{eqnarray}
		\label{eq:algo_2_randomized}
		\widetilde{q}_t^*(y_{1:t-1},\varepsilon_{t+1:n}) =n[ \phi(y_{1:t-1},-1,\varepsilon_{t+1:n})- \phi(y_{1:t-1},+1,\varepsilon_{t+1:n})] 
	\end{eqnarray}
	and draw $\pred_t$ from the distribution on $\{\pm1\}$ with the mean $\widetilde{q}_t^*$. This randomized strategy was essentially proposed in \cite{cesa2011efficient}. 
	
	For $k>2$, we draw uniform independent $u_{t+1},\ldots,u_n$, solve for
	\begin{eqnarray}
		\label{eq:algo_multi_randomized}
		\widetilde{q}_t^*(y_{1:t-1},u_{t+1:n}) = \argmin{q\in \Delta_k}\max_{j\in[k]} \Big\{ -q^\tr {\boldsymbol e}_j - n\phi(y_{1:t-1},j,u_{t+1:n}) \Big\} 
	\end{eqnarray}
	and then draw prediction $\pred_t$ from the distribution $\widetilde{q}_t^*$.
	
	\begin{lemma}
		\label{lem:random_play_bin_multi}
		The doubly-randomized strategies \eqref{eq:algo_2_randomized} and \eqref{eq:algo_multi_randomized} enjoy, in expectation, the same mistake bounds as, respectively, \eqref{eq:algo_2} and \eqref{eq:algo_multi}.
	\end{lemma}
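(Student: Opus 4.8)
The plan is to leverage that the target sequence $\by$ is \emph{fixed} and that the drawn prediction $\pred_t$ never influences the revealed labels, so that the history $y_{1:t-1}$ is deterministic and identical under the exact strategies \eqref{eq:algo_2}, \eqref{eq:algo_multi} and their randomized counterparts \eqref{eq:algo_2_randomized}, \eqref{eq:algo_multi_randomized}. The only extra randomness in the doubly-randomized strategies is the per-round auxiliary draw (the Rademacher $\varepsilon_{t+1:n}$ for $k=2$, or the uniforms $u_{t+1:n}$ for general $k$). Hence it suffices to control the expected per-round error, taken over these fresh draws, and to compare it to the exact strategy round by round. I would first dispatch $k=2$, where the argument is a one-line linearity computation, and then treat general $k$, where the single genuine subtlety appears.

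For $k=2$, fix the deterministic history $y_{1:t-1}$. Writing the mean produced by \eqref{eq:algo_2_randomized} as $\widetilde q_t^{*} = n[\phi(y_{1:t-1},-1,\varepsilon_{t+1:n})-\phi(y_{1:t-1},+1,\varepsilon_{t+1:n})]$, stability \eqref{eq:stab} (the two evaluations differ only in the $t$-th coordinate) gives $\widetilde q_t^{*}\in[-1,1]$, so it is a legitimate mean. The probability of a mistake when predicting with mean $\widetilde q_t^{*}$ is \emph{affine} in $\widetilde q_t^{*}$: it equals $(1-\widetilde q_t^{*})/2$ if $y_t=+1$ and $(1+\widetilde q_t^{*})/2$ if $y_t=-1$. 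Taking expectation over $\varepsilon_{t+1:n}$ and using $\En\widetilde q_t^{*}=n\,\En[\phi(y_{1:t-1},-1,\varepsilon_{t+1:n})-\phi(y_{1:t-1},+1,\varepsilon_{t+1:n})]=q_t^{*}$, the per-round expected error of \eqref{eq:algo_2_randomized} equals that of \eqref{eq:algo_2}. Summing over the deterministic rounds makes $\muca(\by)$ identical for the two strategies, hence the same bound $\muca(\by)\le\phi(\by)$ holds.

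For general $k$ the obstacle is that $q_t^{*}$ in \eqref{eq:algo_multi} is an \emph{argmin}, and an argmin does not commute with expectation, so $\En\widetilde q_t^{*}\neq q_t^{*}$ in general. The resolution is to run the backward-induction argument behind \lemref{lem:cover_multi} one realization at a time. That argument shows that for any vector $\psi\in\reals^k$ obeying the stability bound $\max_j\psi_j-\tfrac1k\sum_i\psi_i\le\tfrac1k$, the water-filling minimizer $q^{*}(\psi)$ of $\max_j\{-q^\tr\boldsymbol{e}_j-\psi_j\}$ satisfies, for \emph{every} $j$, \[ -\,q^{*}(\psi)^\tr\boldsymbol{e}_j-\psi_j \;\le\; -\tfrac1k-\tfrac1k\textstyle\sum_i\psi_i. \] Now set $\widetilde\psi_j=n\,\phi(y_{1:t-1},j,u_{t+1:n})$ for a single draw $u_{t+1:n}$. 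Because \eqref{eq:stab_multiclass} holds for each fixed coordinate \emph{and each fixed realization} of $u_{t+1:n}$, the vector $\widetilde\psi$ meets the stability bound for every realization, so $\widetilde q_t^{*}=q^{*}(\widetilde\psi)$ obeys the displayed per-coordinate inequality pointwise in $u_{t+1:n}$.

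The last step is to take expectations \emph{coordinate-wise} — the maneuver that sidesteps the argmin/expectation mismatch. For each fixed $j$, taking $\En_{u_{t+1:n}}$ of the per-coordinate inequality and using the unbiasedness $\En_{u_{t+1:n}}\widetilde\psi_j=n\,\En\phi(y_{1:t-1},j,u_{t+1:n})=:\psi_j$ yields $-\bar q_t^\tr\boldsymbol{e}_j-\psi_j\le-\tfrac1k-\tfrac1k\sum_i\psi_i$, where $\bar q_t:=\En_{u_{t+1:n}}\widetilde q_t^{*}\in\Delta_k$; maximizing over $j$ recovers exactly the admissibility inequality that the exact strategy \eqref{eq:algo_multi} enjoys. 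Plugging this into the telescoping potential $n\,\En\phi(y_{1:t},u_{t+1:n})$ — whose values are deterministic functions of the fixed $\by$, so no cross-round dependence issues arise — and invoking $\En\phi\ge1-\tfrac1k$ at the root gives $\muca(\by)\le\phi(\by)$ for the randomized strategy, matching \eqref{eq:algo_multi}. I expect this coordinate-wise expectation step to be the crux: it is tempting (and wrong) to equate $\En\widetilde q_t^{*}$ with $q_t^{*}$ as in the binary case, whereas what is actually preserved in expectation is the \emph{inequality} certifying the mistake bound, not the minimizer itself.
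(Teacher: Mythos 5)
Your proposal is correct and follows essentially the same route as the paper: the binary case is exactly the paper's one-line linearity-of-expectation remark, and your multi-class argument is precisely the \emph{random playout} technique the paper invokes from \cite{rakhlin2012relax} and demonstrates in the appendix (proof of Lemma~\ref{lem:adaptive_phi_x_y}), since your ``take expectations coordinate-wise, then maximize over $j$'' step is exactly the Jensen swap $\max_j \En[\,\cdot\,] \le \En[\max_j(\,\cdot\,)]$ combined with the pointwise equalizer property of the water-filling solution. Your closing observation---that it is the admissibility \emph{inequality}, not the argmin, that survives expectation---is the right way to articulate why the technique works, and your self-contained writeout for oblivious sequences is a valid instantiation of the argument the paper leaves to the cited reference.
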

	In the binary prediction case, the proof of Lemma~\ref{lem:random_play_bin_multi} is immediate by the linearity of expectation. The analogous argument for \eqref{eq:algo_multi_randomized} is more tricky and follows from a more general \emph{random playout} technique introduced in \cite{rakhlin2012relax}. This technique also yields a proof of Lemma~\ref{lem:random_play_bin_multi} (for both binary and multi-class cases) for \emph{adaptively chosen} sequences of outcomes, an issue we have not yet discussed (see also Section~\ref{sec:cov} below).

\section{Relaxations and the power of improper learning}
\label{sec:rel}

In the rest of the tutorial, we focus on the binary prediction problem for simplicity. Recall that the computation in \eqref{eq:algo_2} involves drawing random bits $\varepsilon_{t+1},\ldots,\varepsilon_n$ and evaluating the $\phi$ function on two neighboring vertices of the hypercube. If $\phi$ is defined as 
\begin{align}
	\label{eq:reldham0}
	\phi(\by) = \dham(\by,F) + C_{n,F},
\end{align}
then computing \eqref{eq:algo_2} or \eqref{eq:algo_2_randomized} involves comparing two distances to the set $F$, as shown in Figure~\ref{fig:graphics_hypercube-distance-from-two}. Note that the knowledge of $C_{n,F}$ is not needed, as this value cancels off in the difference. 

	\begin{figure}[h]
	  \centering
	    \includegraphics[width=.5\textwidth]{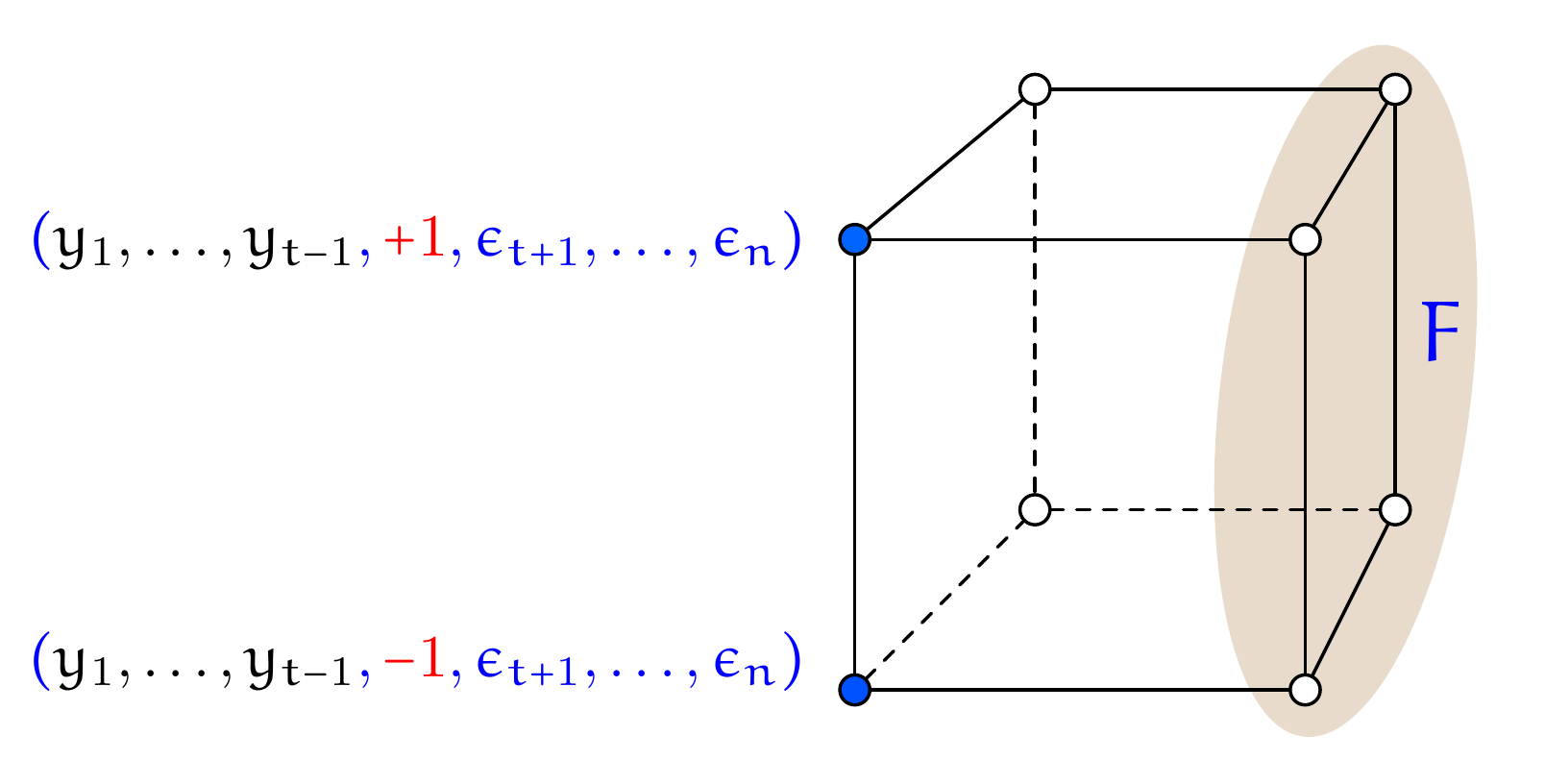}
	  \caption{Randomized strategy involves computing the difference of normalized Hamming distances from neighboring vertices to $F$.}
	  \label{fig:graphics_hypercube-distance-from-two}
	\end{figure}

Since
	\begin{align}
		\label{eq:dham_ext}
		\dham(\by,F) = \min_{\bw\in F} \frac{1}{n}\sum_{t=1}^n \ind{w_t\neq y_t} = \frac{1}{2}-\frac{1}{2n}\max_{\bw\in F} \bw^\tr \by,
	\end{align}
	we may extend the function $\dham(\by,F)$ to any $F\subseteq [-1,1]^n$ by defining it as the right-hand side of \eqref{eq:dham_ext}. The extended $\dham(\by,F)$ is still stable in the sense of \eqref{eq:stab}.
	
	Suppose that calculating the distance $\dham(\by,F)$ is computationally expensive, due to the combinatorial nature of $F\subset \{\pm1\}^n$. Let $F'\subseteq [-1,1]^n$ be a set containing $F$, and suppose that $\dham(\by,F')$ is easier to compute. The following observation is immediate:
	\begin{observation} 
		\label{obs:approx}
		Let $F\subseteq F'\subseteq [-1,1]^n$. Algorithms \eqref{eq:algo_2} and \eqref{eq:algo_2_randomized} with 
		$$\phi'(\by) = \dham(\by,F') + \Rad(F'),$$
	enjoy a mistake bound
	\begin{align}
		\label{eq:obs_mist_F}
		\En\left[ \frac{1}{n}\sum_{t=1}^n \ind{\pred_t\neq y_t} \right] \leq \min_{\bw\in F} \left[ \frac{1}{n}\sum_{t=1}^n \ind{w_t \neq y_t} \right] + \alpha \times  \Rad(F)
	\end{align}
	for $\alpha \geq \Rad(F')/\Rad(F)$.
	\end{observation}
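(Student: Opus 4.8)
The plan is to recognize that $\phi'$ is exactly an instance of the Hamming-distance construction \eqref{eq:def_phi_dham}, now applied to the (possibly non-combinatorial) set $F'\subseteq[-1,1]^n$ together with its \emph{critical} penalty $\Rad(F')$, and then to invoke the inequality form of \lemref{lem:cover}. First I would verify the two hypotheses of that lemma for $\phi'$. Stability follows from the remark after \eqref{eq:dham_ext} that the extended distance $\dham(\cdot,F')$ remains stable on all of $[-1,1]^n$, and adding the constant $\Rad(F')$ does not affect stability. For the mean condition I would use the closed form $\dham(\by,F')=\frac12-\frac{1}{2n}\max_{\bw\in F'}\bw^\tr\by$: taking the expectation over $\by$ uniform on $\{\pm1\}^n$ (equivalently, over Rademacher $\bepsilon$) gives $\En\,\dham(\cdot,F')=\frac12-\Rad(F')$, whence $\En\phi'=\frac12$ exactly. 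So the hypotheses hold, in fact with equality.

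Having checked these, I would apply the constructive (inequality) version of \lemref{lem:cover}: the exact algorithm \eqref{eq:algo_2} run with $\phi'$ achieves $\muca(\by)\le\phi'(\by)$ for every $\by$, and the corresponding claim for \eqref{eq:algo_2_randomized} then follows from \lemref{lem:random_play_bin_multi}, which grants the doubly-randomized strategy the same expected mistake bound. The remainder is a short chain of inequalities. Because $F\subseteq F'$, any minimizer over $F$ is feasible for $F'$, so $\dham(\by,F')\le\dham(\by,F)$; and since $F\subseteq\{\pm1\}^n$, the latter equals $\min_{\bw\in F}\frac1n\sum_t\ind{w_t\neq y_t}$. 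Finally, from $\alpha\ge\Rad(F')/\Rad(F)$ together with $\Rad(F)\ge0$ I get $\Rad(F')\le\alpha\,\Rad(F)$. Stringing these together yields $\muca(\by)\le\dham(\by,F')+\Rad(F')\le\min_{\bw\in F}\frac1n\sum_t\ind{w_t\neq y_t}+\alpha\,\Rad(F)$, which is precisely \eqref{eq:obs_mist_F}.

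Since the statement is flagged as an observation, I do not expect a genuine obstacle; the only points requiring care are bookkeeping. The first is carrying out the mean and stability computations for the \emph{extended} distance on $[-1,1]^n$ rather than merely on the hypercube, which is exactly where one invokes \eqref{eq:dham_ext} and the factor $1/2$ built into the definition of $\Rad$. The second is keeping the direction of the inclusion straight: enlarging the comparator set can only \emph{decrease} the Hamming distance while \emph{increasing} the Rademacher penalty, and this is precisely the trade-off that the factor $\alpha$ is designed to absorb. One should also note the benign edge case $\Rad(F)=0$, where the ratio $\Rad(F')/\Rad(F)$ is read as $+\infty$ and the bound is vacuous.
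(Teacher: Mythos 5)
Your proof is correct and follows the same route as the paper: establish that $\phi'$ is achievable (stability of the extended $\dham(\cdot,F')$ plus $\En\phi'=1/2$, which the paper compresses into ``by construction''), then chain $\muca(\by)\leq \phi'(\by)\leq \dham(\by,F)+\Rad(F')\leq \dham(\by,F)+\alpha\,\Rad(F)$ using $F\subseteq F'$ and the definition of $\alpha$. Your explicit verification of the mean condition via \eqref{eq:dham_ext} and your appeal to \lemref{lem:random_play_bin_multi} for the doubly-randomized variant are exactly the details the paper leaves implicit.
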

	\begin{proof}
		By construction, $\phi'$ is achievable, and 
		\begin{align}
			\label{eq:reldham1}
			\muca(\by)\leq \phi'(\by) \leq \dham(\by,F) + \Rad(F') \leq  \dham(\by,F) + \alpha\times\Rad(F) .
		\end{align}
	\end{proof}
	By relaxing the set $F$ to a larger set $F'$, we may gain in computation while suffering a multiplicative factor $\alpha$ in the Rademacher complexity. Crucially, this factor \emph{does not multiply the Hamming distance} but only the term $\Rad(F)$. The latter is typically of lower order and diminishing with $n$. We may summarize the observation as 
	\begin{center}
		 $\Bigg($\stackanchor{online }{mistakes}$\Bigg)$ ~$\leq$~ $\Bigg($\stackanchor{offline combinatorial}{benchmark}$\Bigg)$ ~$+$~  $\alpha \times \Bigg($\stackanchor{additive $o(1)$}{error}$\Bigg)$~~.
	\end{center}
	There is a reason we belabor this simple observation. In the literature on online learning, it has been noted that one may guarantee 
	\begin{center}
		 $\Bigg($\stackanchor{online }{mistakes}$\Bigg)$ ~$\leq$~ $\alpha \times \Bigg($\stackanchor{offline combinatorial}{benchmark}$\Bigg)$ ~$+$~  $\Bigg($\stackanchor{additive $o(1)$}{error}$\Bigg)$~~.
	\end{center}
	when one has a multiplicative approximation algorithm for the benchmark. The performance of the algorithm in this case is compared to
	$$\alpha \times \min_{\bw\in F} \left[ \frac{1}{n}\sum_{t=1}^n \ind{w_t \neq y_t} \right].$$
	However, the bound easily becomes vacuous (say, the error rate of the offline benchmark is $5\%$ and the multiplicative factor is a constant or logarithmic in $n$). The version where $\alpha$ only enters the remainder term seems much more attractive.
	
	The key to obtaining \eqref{eq:obs_mist_F} is the \textbf{improper} nature of the prediction methods \eqref{eq:algo_2} or \eqref{eq:algo_2_randomized}: the prediction $\pred_t$ need not be in any way consistent with any of the models in $F$. Informally:
	\begin{quote}
		\sf
		Using improper learning methods, it may be possible to predict as well as a combinatorial benchmark (plus a lower-order term) even when computing this very benchmark given all the data is NP-hard.
	\end{quote}
	
	To make the statement more meaningful, we will show that $\alpha$ can be upper bounded---for some interesting examples---in a way that does not render the mistake guarantee vacuous. We start with a simple example in Section~\ref{sec:rad_spectrum}, and then present a more complex machinery based on Constraint Satisfaction in Section~\ref{sec:csp}. 

\subsection{Example: node classification}
\label{sec:rad_spectrum}

Consider the example in Section~\ref{sec:community}, and suppose, additionally, that the undirected graph $G=(V,E,W)$ has weights on edges. The weight on edge $(i,j)$ is denoted by $W_{i,j}$, and $W_{i,j} \equiv 0$ when $(i,j)$ is not an edge. Positive and negative edges may model friend/foe or trust/distrust networks.

We define $F_\kappa$ as in \eqref{eq:def_F_laplacian}, with the understanding that $L$ is now the weighted graph Laplacian: $L=D-W$ with $D$ the diagonal matrix,  $D_{i,i} = \sum_{j=1}^n |W_{i,j}|$. As before, set $\phi(\by) = \dham(\by,F_\kappa) + C_{n,F_\kappa}$. Why would evaluation of $\dham(\cdot, F_\kappa)$ be computationally hard? First, if we can evaluate $\phi$ for any $\kappa$, we can also find the value 
\begin{align}
	\label{eq:lap_iqp}
	\min_{\bw\in\{\pm1\}^n} \bw^\tr L \bw.
\end{align}	
However, if all the edge weights are $-1$, then \eqref{eq:lap_iqp} becomes
\begin{align}
	\label{eq:lap_iqp_minuncut}
	 \min_{\bw\in\{\pm1\}^n} \sum_{(i,j)\in E} 1+ w_i w_j,
\end{align} 
which may be recognized as the value of {\sf MinUnCut}, an NP-hard problem in general. Hence, we cannot hope to evaluate the Hamming distance to $F_\kappa$ exactly. Our first impulse is to approximate the value in \eqref{eq:lap_iqp}; in the case of {\sf MinUnCut} this can be done with a multiplicative factor of $O(\sqrt{\log n})$. However, it is not clear how to turn such a multiplicative approximation into a mistake bound with the factor $1$ in front of the combinatorial benchmark. Yet, such a bound \emph{is} possible, as we show next.


Following Observation~\ref{obs:approx}, we set
\begin{align}
	F'_\kappa = \left\{ \bw\in [-1,1]^n ~:~ \bw^\tr L \bw \leq \kappa \right\}
\end{align}
and extend
\begin{align}
	\dham(\by,F'_\kappa) = \frac{1}{2}-\frac{1}{2n}\max_{\bw\in F'_\kappa} \bw^\tr \by,
\end{align}
as in \eqref{eq:dham_ext}. Note that evaluating $\dham(\by,F'_\kappa)$ amounts to maximization of a linear function subject to a quadratic constraint $\bw^\tr L \bw \leq \kappa$ and a box constraint $\bw\in [-1,1]^n$. This can be accomplished with standard optimization toolboxes. 

It remains to estimate $\Rad(F'_\kappa)$. To this end, notice that
\begin{align}
	F'_\kappa \subset \left\{\bw\in\reals^n: \bw^\tr \bw \leq n, \bw^\tr L \bw\leq \kappa \right\} \subset \left\{\bw\in\reals^n: \bw^\tr M \bw \leq 1 \right\}
\end{align}
with $M=\frac{1}{2n}I + \frac{1}{2\kappa}L$. Hence, we can upper bound
\begin{align}
	\label{eq:rad_spectrum}
	2n \cdot \Rad(F'_\kappa) = \En_{\bepsilon} \max_{\bw\in F'_\kappa} \bw^\tr \bepsilon \leq \En_{\bepsilon}  \max_{\bw^\tr M \bw\leq 1} \bw^\tr \bepsilon \leq \En_{\bepsilon}  \sqrt{\bepsilon^\tr M^{-1}\bepsilon} \leq \sqrt{\sum_{j=1}^n \lambda_j(M)^{-1}},
\end{align}
where $\lambda_j(M)$ is the $j$th eigenvalue of $M$. The upper bound in \eqref{eq:rad_spectrum} depends on the spectrum of the underlying network's Laplacian with an added regularization term $\frac{1}{2n}I$. It is an interesting research direction to find tighter upper bounds, especially when the social network evolves according to a random process.

To summarize, we relaxed $F_\kappa$ to a larger set $F'_\kappa$, for which computation can be performed with an off-the-shelf optimization toolbox. Further, we derived a (rather crude) upper bound on the Rademacher averages of this set. In our calculations, however, we did not obtain an upper bound on the multiplicative gap $\alpha$ between the original Rademacher averages $\Rad(F_\kappa)$ and the larger value $\Rad(F'_\kappa)$. Hence, the price we paid for efficiently computable solutions remains unknown. In the next section, we present a generic way to glean this payment from known approximations to integer programs.

\section{Computational hierarchies}
\label{sec:csp}

Unlike the rest of the tutorial, this section is not self-contained. Our aim is to sketch the technique in \cite{RakSri15hierarchies}, hiding the details under the rug. We also refer the reader to the literature on approximation algorithms based on semidefinite and linear programming (see e.g. \cite{gartner2012approximation} and references therein).

\subsection{Relaxing the optimization problem}

Consider the set
\begin{align}
	\label{eq:def_F_constr}
	F_\kappa = \left\{ \bw\in \{\pm1\}^n ~:~ \Constr(\bw) \leq \kappa \right\}
\end{align}
for some $\Constr:\{\pm1\}^n\to \reals$, which we call a \emph{constraint}. The definitions \eqref{eq:def_F_laplacian} and \eqref{eq:lap2w} in terms of graph Laplacian and weighted graph Laplacian are two examples of such a definition. A more general example is Constraint Satisfaction: let ${\mathscr C}$ be a collection of functions ${\mathfrak z}:\{\pm1\}^n\to \reals$ and set $$\Constr(\bw) = \sum_{{\mathfrak z}\in{\mathscr C}} {\mathfrak z}(\bw).$$

Recall that computing a prediction amounts to evaluating the weighted Hamming distance from $\by\in\{\pm1\}^n$ to $F_\kappa$, which---in view of \eqref{eq:dham_ext}---is equivalent to finding the value  
	\begin{align}
		\label{eq:opt1}
		\OPT_1(\kappa,\by) ~\deq~ \max_{\bw\in F_\kappa,~ \bw^\tr \by\geq \beta}~ \beta
	\end{align}
	and then setting 
	\begin{align}
		\label{eq:phidef_OPT1}
		\phi(\by) = \frac{1}{2} - \frac{1}{2n} \OPT_1(\kappa,\by) + C_{n,F_\kappa}.
	\end{align}
	Observation~\ref{obs:approx} suggests that if $\OPT_1$ cannot be easily computed for the set $F_\kappa$, we should aim to find a larger set $F'_\kappa$ for which this optimization is easier. A twist here is that we will not write down the definition of $F'_\kappa$ explicitly (although it can be understood as a projection of a certain higher-dimensional object). Instead, let us replace $\OPT_1(\kappa,\by)$ in \eqref{eq:phidef_OPT1} by a value $\SDP_1(\kappa,\by)$ of some other optimization problem (to be specified in a bit) and set
	\begin{align}
		\label{eq:phiprime_sdp}
		\phi'(\by) = \frac{1}{2} - \frac{1}{2n} \SDP_1(\kappa,\by) + C_{n,F'_\kappa}.
	\end{align}
	As before, the condition $\En\phi'(\bepsilon)\geq 1/2$ for achievability of this function implies that the smallest value of the constant is
	\begin{align}
		\label{eq:cprime}
		C_{n,F'_\kappa} = \frac{1}{2n} \En\left[ \SDP_1(\kappa,\bepsilon) \right].
	\end{align}
	Before defining $\SDP_1$, let us state a version of Observation~\ref{obs:approx}:
	\begin{observation}
		\label{obs:approx2}
		If for any $\by\in\{\pm1\}^n$ it holds that 
		\begin{align}
			\label{eq:wish_integr_gap}
			\SDP_1(\kappa,\by) \leq \OPT_1(\alpha \cdot \kappa,\by)
		\end{align} 
		for some $\alpha\in\reals$, then using the algorithm \eqref{eq:algo_2} or \eqref{eq:algo_2_randomized} with $\phi'$ in \eqref{eq:phiprime_sdp}, we guarantee
a regret bound of
	\begin{align}
		\label{eq:obs_mist_F_kappa}
		\En\left[ \frac{1}{n}\sum_{t=1}^n \ind{\pred_t\neq y_t} \right] \leq \min_{\bw\in F_\kappa} \left[ \frac{1}{n}\sum_{t=1}^n \ind{w_t \neq y_t} \right] + \Rad(F_{\alpha\cdot \kappa})
	\end{align}
	for any sequence $\by$.
	\end{observation}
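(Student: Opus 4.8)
The plan is to reduce the statement to Observation~\ref{obs:approx} by reading $\SDP_1(\kappa,\cdot)$ as the support function of a convex relaxation $F'_\kappa\supseteq F_\kappa$. Concretely, I would interpret \eqref{eq:phiprime_sdp} as $\phi'(\by)=\dham(\by,F'_\kappa)+C_{n,F'_\kappa}$, where $\dham(\by,F'_\kappa)=\tfrac{1}{2}-\tfrac{1}{2n}\SDP_1(\kappa,\by)$ in the extended sense of \eqref{eq:dham_ext}, and $F'_\kappa$ is the (convex) projection of the lifted SDP feasible region, so that $\SDP_1(\kappa,\by)=\max_{\bw\in F'_\kappa}\bw^\tr\by$. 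Since the SDP is a relaxation of the integer program defining $\OPT_1(\kappa,\by)=\max_{\bw\in F_\kappa}\bw^\tr\by$, it over-estimates it pointwise, $\SDP_1(\kappa,\by)\geq \OPT_1(\kappa,\by)$, which is exactly the inclusion $F_\kappa\subseteq F'_\kappa$ required by Observation~\ref{obs:approx}.

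First I would verify achievability. The constant \eqref{eq:cprime} is chosen precisely so that $\En_{\bepsilon}\phi'(\bepsilon)=\tfrac{1}{2}$, and $\phi'$ is stable because $\by\mapsto\SDP_1(\kappa,\by)$ is a maximum of linear functions of $\by$, making the extended $\dham(\cdot,F'_\kappa)$ stable as already noted after \eqref{eq:dham_ext}. By Lemma~\ref{lem:cover}, $\phi'$ is then achievable, so the algorithms \eqref{eq:algo_2}/\eqref{eq:algo_2_randomized} run with $\phi'$ satisfy $\muca(\by)\leq\phi'(\by)$ for every $\by$.

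Next I would invoke the chain \eqref{eq:reldham1} from the proof of Observation~\ref{obs:approx} with $F=F_\kappa$ and $F'=F'_\kappa$: because $F_\kappa\subseteq F'_\kappa$ gives $\dham(\by,F'_\kappa)\leq\dham(\by,F_\kappa)$, we obtain
$$\muca(\by)\leq\phi'(\by)=\dham(\by,F'_\kappa)+\Rad(F'_\kappa)\leq\dham(\by,F_\kappa)+\Rad(F'_\kappa).$$
It then remains only to replace $\Rad(F'_\kappa)$ by $\Rad(F_{\alpha\cdot\kappa})$, and this is where the integrality-gap hypothesis \eqref{eq:wish_integr_gap} enters. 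Writing $\Rad(F'_\kappa)=C_{n,F'_\kappa}=\tfrac{1}{2n}\En[\SDP_1(\kappa,\bepsilon)]$ and $\Rad(F_{\alpha\cdot\kappa})=\tfrac{1}{2n}\En[\OPT_1(\alpha\cdot\kappa,\bepsilon)]$, I would apply \eqref{eq:wish_integr_gap} pointwise at $\by=\bepsilon$ and take expectation over the Rademacher vector, yielding $\En[\SDP_1(\kappa,\bepsilon)]\leq\En[\OPT_1(\alpha\cdot\kappa,\bepsilon)]$ and hence $\Rad(F'_\kappa)\leq\Rad(F_{\alpha\cdot\kappa})$. Combining with the display finishes the proof, since $\dham(\by,F_\kappa)=\min_{\bw\in F_\kappa}\tfrac{1}{n}\sum_t\ind{w_t\neq y_t}$.

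The routine parts are the two expectation/linearity computations; the one point that genuinely requires care---and which should be recorded as a standing assumption on how $\SDP_1$ is constructed---is the pointwise relaxation bound $\SDP_1(\kappa,\by)\geq\OPT_1(\kappa,\by)$. Together with \eqref{eq:wish_integr_gap} it sandwiches $\OPT_1(\kappa,\by)\leq\SDP_1(\kappa,\by)\leq\OPT_1(\alpha\cdot\kappa,\by)$: the lower side keeps the Hamming term anchored at its true (harder) set $F_\kappa$ with factor $1$, while the upper side charges the relaxation only inside the lower-order Rademacher term, now inflated to level $\alpha\cdot\kappa$. The main obstacle is therefore conceptual rather than computational: ensuring the chosen relaxation simultaneously over-estimates $\OPT_1$ at level $\kappa$ and admits a rounding guarantee of the form \eqref{eq:wish_integr_gap}---which is exactly what the SDP hierarchy machinery of \cite{RakSri15hierarchies} is designed to provide.
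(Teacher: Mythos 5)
Your proof is correct and takes essentially the same route as the paper's: its one-line proof invokes exactly your three ingredients---achievability of $\phi'$ with the constant \eqref{eq:cprime}, the containment $F_\kappa\subseteq F'_\kappa$ (equivalently the pointwise relaxation bound $\SDP_1(\kappa,\by)\geq\OPT_1(\kappa,\by)$) to keep the Hamming term anchored at $F_\kappa$ with factor $1$, and the gap hypothesis \eqref{eq:wish_integr_gap} applied under the Rademacher expectation to bound $C_{n,F'_\kappa}$ by $\Rad(F_{\alpha\cdot\kappa})$. The one point you flag as a standing assumption is precisely what the paper states as ``the fact that $F'_\kappa$ defined by $\SDP_1$ contains $F_\kappa$,'' so there is no gap.
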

	\begin{proof}
		Immediate from \eqref{eq:cprime} and \eqref{eq:wish_integr_gap}, and the fact that $F'_\kappa$ defined by $\SDP_1$ contains $F_\kappa$. 
	\end{proof}

	We now define $\SDP_1$, along with two more auxiliary optimization problems, and prove \eqref{eq:wish_integr_gap}. 
	
	\subsection{Setting up auxiliary optimization problems}

	$\OPT_1$ is an optimization of a linear objective over vertices of the hypercube, under the restriction $\Constr(\x)\leq \kappa$. In the literature, much effort has been devoted to analyzing a dual problem: minimization of $\Constr(\x)$, possibly subject to a linear constraint. Our plan of attack is to define dual auxiliary optimization formulations, then use ``integrality gap'' for these problems, and pass back to the primal objective $\OPT_1$ in order to prove \eqref{eq:wish_integr_gap}.
	
	Let us define the set of probability distributions on those vertices of the hypercube that yield the value of at least $\beta$ for the linear objective:
	\begin{align}
		\label{eq:delta2}
		\cD(\beta,\by) ~\deq~ \Delta(\{\bw\in \{\pm1\}^n,~ \bw^\tr \by \geq \beta\}).
	\end{align}	
Define the optimization problem
	\begin{align}
		\label{eq:opt2}
		\OPT_2(\beta,\by) ~\deq~ \min_{\bw\in \{\pm1\}^n,~ \bw^\tr \by \geq \beta}~ \Constr(\bw) ~=~ \min_{p\in \cD(\beta,\by)} \En_{\bz\sim p} \Constr(\bz),
	\end{align}
the minimum constraint value achievable on the vertices of the hypercube, given that the linear objective value is at least $\beta$. The second equality in \eqref{eq:opt2} holds true because the minimum of a linear (in $p$) objective is attained at a singleton, a vertex of the hypercube.

Both \eqref{eq:opt1} and \eqref{eq:opt2} are combinatorial optimization problems, which may be computationally intractable. A common approach to approximating these hard problems is to pass from distributions to \emph{pseudo-distributions}. Roughly speaking, a pseudo-distribution at ``level $r$'' only behaves like a distribution for tuples of variables of size up to $r$. Associated to a pseudo-distribution is a notion of a pseudo-expectation, denoted by $\widehat{\En}$. We refer to \cite{boaz14notes,chlamtac2012convex,rothvoss2013lasserre} for details.

Let $\widehat{\cD}(\beta,\by)$ be the set of suitably defined pseudo-distributions with the property 
$$\widehat{\cD}(\beta,\by)\subseteq \widehat{\cD}(\beta',\by)$$
whenever $\beta'\leq \beta$ (see \cite{RakSri15hierarchies} for the precise definitions of these sets in terms of semidefinite programs). Define a relaxation of \eqref{eq:opt2} as
	\begin{align}
		\SDP_2(\beta,\by) ~\deq~ \min_{\widehat{p}\in\widehat{\cD}(\beta,\by)}~ \widehat{\En}_{\bz\sim \widehat{p}} \Constr(\bz)
	\end{align}
	and let 
	\begin{align}
		\SDP_1(\kappa,\by) ~\deq~ \max_{\exists \widehat{p} \in\widehat{\cD}(\beta,\by) ~\text{s.t.}~  \widehat{\En}_{\bz\sim \widehat{p}} \Constr(\bz)\leq \kappa}~ \beta ~.
	\end{align}
We write ``SDP'' here because relaxations we have in mind arise from semidefinite relaxations, but the arguments below are generic and hold for  other approximations of combinatorial optimization problems.

The \emph{integrality gap} of the dual formulation is 
\begin{align}
	\label{eq:intgap}
	\alpha(\by) \deq \max_{\beta}~ \frac{\OPT_2(\beta,\by)}{\SDP_2(\beta,\by)},
\end{align}
with $\alpha \deq \max_{\by} \alpha(\by)$.
We emphasize that we define the gap for the dual problems. Next, we show that the gap appears when relating $\SDP_1$ to $\OPT_1$.

\begin{lemma}
	\label{lem:int_gap}
	For any $\by\in\{\pm1\}^n$,
	\begin{align}
		\SDP_1(\kappa, \by)\leq \OPT_1(\alpha(\by) \cdot \kappa, \by).
	\end{align}
\end{lemma}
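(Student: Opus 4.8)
The plan is to exploit a duality between the ``primal'' problems $\OPT_1,\SDP_1$ (which maximize the linear objective $\bw^\tr\by$ subject to a budget on $\Constr$) and the ``dual'' problems $\OPT_2,\SDP_2$ (which minimize $\Constr$ subject to a floor $\beta$ on the linear objective). Concretely, I would first establish that each pair is tied together by a level-set identity,
$$\OPT_1(\kappa,\by) = \max\{\beta : \OPT_2(\beta,\by)\leq\kappa\}, \qquad \SDP_1(\kappa,\by) = \max\{\beta : \SDP_2(\beta,\by)\leq\kappa\}.$$
The first identity is immediate from unwinding definitions: $\OPT_2(\beta,\by)\leq\kappa$ says exactly that there is a vertex $\bw$ with $\bw^\tr\by\geq\beta$ and $\Constr(\bw)\leq\kappa$, i.e. $\bw\in F_\kappa$ witnessing $\OPT_1(\kappa,\by)\geq\beta$, and conversely. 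The second identity follows in the same way once we note that the existence of a feasible pseudo-distribution in the definition of $\SDP_1$ is precisely the statement $\SDP_2(\beta,\by)\leq\kappa$. Both identities rely on monotonicity of $\OPT_2(\cdot,\by)$ and $\SDP_2(\cdot,\by)$ in $\beta$: the former is clear because $\{\bw:\bw^\tr\by\geq\beta\}$ shrinks as $\beta$ grows, and the latter is exactly where the nesting hypothesis $\widehat{\cD}(\beta,\by)\subseteq\widehat{\cD}(\beta',\by)$ for $\beta'\leq\beta$ enters, since minimizing over a shrinking family of pseudo-distributions yields a nondecreasing value.

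Given these identities, the proof is a short chain. I would set $\beta^\star=\SDP_1(\kappa,\by)$. By the second identity, $\SDP_2(\beta^\star,\by)\leq\kappa$. The integrality-gap definition \eqref{eq:intgap} gives $\OPT_2(\beta,\by)\leq\alpha(\by)\,\SDP_2(\beta,\by)$ for every $\beta$, so in particular
$$\OPT_2(\beta^\star,\by) \leq \alpha(\by)\,\SDP_2(\beta^\star,\by) \leq \alpha(\by)\cdot\kappa.$$
Applying the first identity with budget $\alpha(\by)\cdot\kappa$ in place of $\kappa$, the inequality $\OPT_2(\beta^\star,\by)\leq\alpha(\by)\cdot\kappa$ places $\beta^\star$ in the set $\{\beta:\OPT_2(\beta,\by)\leq\alpha(\by)\kappa\}$, whence $\OPT_1(\alpha(\by)\cdot\kappa,\by)\geq\beta^\star=\SDP_1(\kappa,\by)$, which is the claim.

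The one place requiring care — and the main obstacle — is making the level-set identities airtight: the attainment of the maxima defining $\OPT_1$ and $\SDP_1$, together with the monotonicity that converts ``$\SDP_2(\beta^\star,\by)\leq\kappa$'' into a genuine witness. If the optima are attained (as they are for the semidefinite formulations over a compact feasible region), the argument closes as above; otherwise I would run the same computation for every $\beta<\SDP_1(\kappa,\by)$, obtaining $\OPT_1(\alpha(\by)\kappa,\by)\geq\beta$, and then let $\beta\uparrow\SDP_1(\kappa,\by)$. Everything else is bookkeeping; no inequality beyond the integrality gap is needed, and the gap is invoked exactly once, on the dual (minimization) side, precisely as advertised.
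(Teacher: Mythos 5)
Your proof is correct and takes essentially the same route as the paper's: your level-set identities are just a repackaging of the paper's two key observations $\SDP_2(\SDP_1(\kappa),\by)\leq\kappa$ and $\OPT_1(\OPT_2(\beta,\by),\by)\geq\beta$, and you invoke the integrality gap exactly once on the dual (minimization) pair, precisely as the paper does. Your extra care about attainment of the optima is a minor refinement that the paper glosses over; otherwise the two arguments coincide step for step.
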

\begin{proof}[\textbf{Proof of Lemma~\ref{lem:int_gap}}]
	Let us fix $\by$ and for brevity omit it from $\OPT$ and $\SDP$ definitions. First, it holds that
\begin{align}
	\label{eq:sdp2sdp1}
	\SDP_2(\SDP_1(\kappa))\leq \kappa.
\end{align}
Indeed, $\SDP_1(\kappa)$ is the value of $\beta$ that guarantees existence of some pseudo-distribution $\widehat{p}$ in $\widehat{\cD}(\beta,\by)$ that respects the constraint of $\kappa$ on average. Hence, for this value of $\beta$, the minimum in $\SDP_2(\beta)$ will include this pseudo-distribution, and, hence, the value of the minimum is at most $\kappa$. By the same token,
\begin{align}
	\label{eq:opt1opt2}
	\OPT_1(\OPT_2(\beta))\geq \beta.
\end{align}
Indeed, $\OPT_2(\beta)$ is the value of $\Constr(\bw)$ achieved by some $\bw\in\{\pm1\}^n$ satisfying $\bw^\tr\by\geq \beta$. The maximization in $\OPT_1$ includes this $\bw$ by the definition of $F_\kappa$, and, hence, the maximum is larger than $\beta$.

Next, by the definition of the integrality gap and \eqref{eq:sdp2sdp1},
\begin{align}
	\label{eq:opt2sdp1}
	\OPT_2(\SDP_1(\kappa))\leq \alpha \cdot \SDP_2(\SDP_1(\kappa))\leq \alpha \cdot \kappa.
\end{align}
By \eqref{eq:opt1opt2} and \eqref{eq:opt2sdp1},
\begin{align}
	\SDP_1(\kappa)\leq \OPT_1(\OPT_2(\SDP_1(\kappa))) \leq \OPT_1(\alpha \cdot \kappa)
\end{align}
because the value of $\OPT_1(\kappa)$ is nondecreasing in $\kappa$. 
\end{proof}

It remains to provide concrete examples where $\alpha(\by)$ is bounded in a non-trivial manner. 

\subsection{Back to node classification}

Recall the node classification example discussed in Section~\ref{sec:rad_spectrum}. In view of Lemma~\ref{lem:int_gap}, to conclude the mistake bound \eqref{eq:obs_mist_F_kappa} we only need to get an estimate on $\alpha$. 
For the case $\Constr(\bw) = \bw^\tr L \bw$, the integrality gap defined in \eqref{eq:intgap} is the ratio of an integer quadratic program (IQP) subject to linear constraint \eqref{eq:opt2} and its relaxed version. We turn to \cite[Theorem 6.1]{guruswami2013rounding}, which tells us that within $O(r)$ levels of Lasserre hierarchy one can solve the IQP subject to linear constraints with the gap of at most
$$O(\max\{\lambda_r^{-1},1\})$$ 
where $\lambda_r$ is the $r$-th eigenvalue of the normalized graph Laplacian. One can verify that $\Rad(F_\kappa)$ grows as $\sqrt{\kappa}$, and thus we essentially pay an extra factor of  $O(\max\{\lambda_r^{-1/2},1\})$ for having a polynomial-time algorithm, with the computational complexity of $O(n^{r})$.

There are several points worth emphasizing:
\begin{itemize}
	\item As another manifestation of improper learning, the prediction algorithm does not need to ``round the solution.'' The integrality gap only appears in the mistake analysis. This means that any improvement in the gap analysis of semidefinite or other relaxations immediately translates to a tighter mistake bound for the same prediction algorithm.
	\item It is the expected gap (with respect to a random direction $\bepsilon$) that enters the bound, a quantity that may be smaller than the worst-case gap. It would be interesting to quantify this gain.
\end{itemize}
As an alternative to definition \eqref{eq:def_F_constr}, one may combine the linear objective and the constraint in a single ``penalized'' form. Such an approach allows one to use Metric Labeling approximation algorithms \cite{kleinberg2002approximation}, and we refer to \cite{RakSri15hierarchies} for more details.

\section{Incorporating covariates}
\label{sec:cov}

In most applications of online prediction, some additional side information is available to the decision-maker before she makes the prediction. Consider the following generalization of the problem introduced in Section~\ref{sec:basics}. At time $t=1,\ldots,n$, the forecaster observes side information $x_t\in\X$, makes a forecast $\pred_t\in\{\pm1\}$ (based on the history $y_1,\ldots,y_{t-1}$ and $x_1,\ldots,x_t$), and then the value $y_t$ is observed.

Let $\phi$ be a function of two sequences: $\phi:\X\times \{\pm1\}^n \to \reals$. The function $\phi$ is stable if
\begin{align}
	\label{eq:lec12:smoothness}
	|\phi(\bx; y_{1:t-1},+1,y_{t+1:n})-\phi(\bx; y_{1:t-1},-1,y_{t+1:n})|\leq 1/n,
\end{align}
where $\bx=(x_1,\ldots,x_n)$.
We will prove the following generalization of Lemma~\ref{lem:cover}.

\begin{lemma}
	\label{lem:phi_x_y}
	Let $\phi:\X\times \{\pm1\}^n\to\reals$ be stable, and suppose that $x_t$'s are i.i.d. Then there exists a prediction strategy  such that
	\begin{align}
		\label{eq:lec12:mistake_bound_iid}
		\forall \by\in\{\pm1\}^n,~~~~ \En\left[ \frac{1}{n}\sum_{t=1}^n\ind{\pred_t\neq y_t}\right] \leq \En\phi(\bx;\by)
	\end{align}
	if and only if
	\begin{align}
		\label{eq:lec12:half_cond_phi}
		\En\phi(\bx;\bepsilon)\geq 1/2.
	\end{align}
\end{lemma}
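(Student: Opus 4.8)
The plan is to mirror the backward-induction / potential-function proof underlying Lemma~\ref{lem:cover}, inserting an extra averaging over the i.i.d.\ covariates at each round. Write $D$ for the common law of the $x_t$'s (the strategy we construct is allowed to depend on $D$, since this is an existence statement). For a realized prefix $x_{1:t}$ and labels $y_{1:t}$, I would define the potential
\[ V_t(x_{1:t};y_{1:t}) = \En_{x_{t+1:n}\sim D^{n-t}}\ \En_{\varepsilon_{t+1:n}}\left[\phi(x_{1:n};y_{1:t},\varepsilon_{t+1:n})\right], \]
so that $V_n(\bx;\by)=\phi(\bx;\by)$ and, at $t=0$, $V_0 = \En\phi(\bx;\bepsilon)$. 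As the prediction mean on round $t$ I would take the covariate-aware analogue of \eqref{eq:algo_2},
\[ \widehat{q}_t = n\left[V_t(x_{1:t};y_{1:t-1},-1) - V_t(x_{1:t};y_{1:t-1},+1)\right], \]
which depends only on $(x_{1:t},y_{1:t-1})$, as required. Stability \eqref{eq:lec12:smoothness} is preserved under the expectations defining $V_t$, so $|\widehat{q}_t|\le 1$ and $\widehat{q}_t$ is a legitimate mean of a $\{\pm1\}$-valued draw.

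The core is a two-step recursion. First, conditioning on the fresh Rademacher coin for $y_t$ together with the definition of $\widehat{q}_t$ gives the ``label-reveal'' identity
\[ V_t(x_{1:t};y_{1:t}) = \tfrac{1}{2}\left[V_t(x_{1:t};y_{1:t-1},+1)+V_t(x_{1:t};y_{1:t-1},-1)\right] - \frac{y_t\,\widehat{q}_t}{2n}, \]
exactly as in Cover's lemma, the bracket being the pre-$y_t$ potential. Second---and this is where the i.i.d.\ hypothesis enters---averaging the bracket over a fresh draw $x_t\sim D$ reproduces the earlier potential,
\[ \En_{x_t\sim D}\left[\tfrac{1}{2}V_t(x_{1:t-1},x_t;y_{1:t-1},+1)+\tfrac{1}{2}V_t(x_{1:t-1},x_t;y_{1:t-1},-1)\right] = V_{t-1}(x_{1:t-1};y_{1:t-1}), \]
because the law of $x_t$ is known and independent of the past, making the future-covariate expectations defining consecutive $V_t$'s consistent. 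Taking the full expectation over $\bx\sim D^n$ and writing $\bar V_t(y_{1:t})=\En_{x_{1:t}}V_t(x_{1:t};y_{1:t})$ and $\bar q_t=\En_{x_{1:t}}\widehat{q}_t$, these two identities combine into $\bar V_t(y_{1:t}) = \bar V_{t-1}(y_{1:t-1}) - \tfrac{y_t\bar q_t}{2n}$.

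Telescoping this over $t=1,\dots,n$ for a fixed adversarial $\by$ yields $\tfrac{1}{2n}\sum_t y_t\bar q_t = \bar V_0 - \En\phi(\bx;\by)$. Since a draw $\pred_t$ with mean $\widehat{q}_t$ errs with probability $\tfrac12(1-y_t\widehat{q}_t)$, the left-hand side of \eqref{eq:lec12:mistake_bound_iid} equals $\tfrac12 - \tfrac{1}{2n}\sum_t y_t\bar q_t = \tfrac12 - \bar V_0 + \En\phi(\bx;\by)$, and the claimed bound $\le \En\phi(\bx;\by)$ follows precisely from $\bar V_0 = \En\phi(\bx;\bepsilon)\ge \tfrac12$, which is \eqref{eq:lec12:half_cond_phi}. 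For the converse I would average the assumed bound over $\by=\bepsilon$ uniform: because $\varepsilon_t$ is an unbiased coin independent of everything $\pred_t$ can depend on (the covariates $x_{1:t}$ and the past labels), we have $\Prob[\pred_t\neq\varepsilon_t]=\tfrac12$ on every round, so the averaged left-hand side is exactly $\tfrac12$, forcing $\En\phi(\bx;\bepsilon)\ge\tfrac12$.

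The main obstacle---the only genuinely new ingredient beyond Lemma~\ref{lem:cover}---is the ``covariate-reveal'' step: one must check that integrating the post-reveal potential over a fresh $x_t\sim D$ collapses it back to the pre-reveal potential, which relies squarely on the $x_t$ being i.i.d.\ with a known law. The remaining difficulty is pure bookkeeping: since $\widehat{q}_t$ is a random function of the observed $x_{1:t}$, the telescoping cannot be done pathwise but must be carried out after taking the expectation over $\bx$, which is exactly what the averaged potentials $\bar V_t$ accomplish.
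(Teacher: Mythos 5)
Your proposal is correct and is essentially the paper's own argument: your potential $V_t(x_{1:t};y_{1:t})=\En_{x_{t+1:n},\varepsilon_{t+1:n}}\phi(x_{1:n};y_{1:t},\varepsilon_{t+1:n})$ coincides (up to sign and the additive constant $\tfrac{n-t}{2n}$) with the paper's $\Rel_t$ in \eqref{eq:rel_phi_t_induct}, and your strategy is exactly \eqref{eq:lec12:qt_side_info}, with the same backward recursion alternating a label-average and a covariate-average step. The only difference is one of scope, not of method: the paper runs the recursion as a min/max so that the same argument proves the stronger adaptive-Nature version (Lemma~\ref{lem:adaptive_phi_x_y}), whereas your telescoping after averaging over $\bx$ handles the oblivious $\by$ case, which is all that Lemma~\ref{lem:phi_x_y} asserts.
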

Above, the expectation on the left-hand side of \eqref{eq:lec12:mistake_bound_iid} is over the randomization of the algorithm and the $x$'s, while on the right-hand side the expectation is over the $x$'s. In \eqref{eq:lec12:half_cond_phi}, the expectation is both over the $x$'s and over the independent Rademacher random variables.

An attentive reader will notice that the guarantee \eqref{eq:lec12:mistake_bound_iid} is not very interesting because $x_t$'s are chosen independently while $y_t$'s are fixed ahead of time. The issue would be resolved if $y_t$'s could be chosen by Nature \emph{after} seeing $x_t$. Let us call such a Nature \emph{semi-adaptive}, and reserve the word \emph{adaptive} for Nature that chooses $y_t$'s based also on the full history of $\{(x_s,y_s,\pred_s)\}_{s=1}^{t-1}$ (including learner's predictions). 

\begin{lemma}
	\label{lem:adaptive_phi_x_y}
	Lemma~\ref{lem:phi_x_y} holds for an adaptive Nature. That is, \eqref{eq:lec12:half_cond_phi} is equivalent to existence of a prediction algorithm (given in \eqref{eq:opt_rand_strategy_hybrid} below) with
	\begin{align}
		\label{eq:lec12:mistake_bound_iid_adaptive}
		\En\left[ \frac{1}{n}\sum_{t=1}^n\ind{\pred_t\neq y_t}\right] \leq \En\phi(\bx;\by),
	\end{align}
	where each $y_t\in\{\pm1\}$ may be chosen arbitrarily by Nature  based on the history $\{(x_s,y_s,\pred_s)\}_{s=1}^{t-1}$ and $x_t$.
\end{lemma}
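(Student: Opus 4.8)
The plan is to prove the two implications separately, disposing of the (easy) necessity of \eqref{eq:lec12:half_cond_phi} first and then establishing sufficiency by a backward-induction / \emph{random-playout} argument in the spirit of \cite{rakhlin2012relax}, which simultaneously exhibits the strategy \eqref{eq:opt_rand_strategy_hybrid}. For necessity, suppose some strategy attains \eqref{eq:lec12:mistake_bound_iid_adaptive} against \emph{every} adaptive Nature. I would then specialize to the degenerate adaptive Nature that ignores the history and plays $y_t=\varepsilon_t$ for fresh independent Rademacher bits. Since $\pred_t$ depends only on $x_{1:t}$, $\varepsilon_{1:t-1}$ and the learner's internal randomness, it is independent of $\varepsilon_t$, so $\E{\ind{\pred_t\neq\varepsilon_t}}=1/2$ on every round; hence the left-hand side of \eqref{eq:lec12:mistake_bound_iid_adaptive} is exactly $1/2$, while its right-hand side is $\En\phi(\bx;\bepsilon)$, forcing $\En\phi(\bx;\bepsilon)\geq 1/2$. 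This is precisely \eqref{eq:lec12:half_cond_phi}.

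For sufficiency I would track the conditional minimax value-to-go. Writing the expected per-round cost as $\E{\ind{\pred_t\neq y_t}}=\tfrac12(1-q_t y_t)$ with $q_t=\E{\pred_t}\in[-1,1]$, define the value-to-go by the backward recursion
\[ W_{t-1}(x_{1:t-1};y_{1:t-1}) = \En_{x_t\sim D}\Big[\ \min_{q_t\in[-1,1]}\ \max_{y_t\in\{\pm1\}}\big\{\tfrac12(1-q_t y_t)+W_t(x_{1:t};y_{1:t})\big\}\Big], \]
with terminal condition $W_n(\bx;\by)=-\,n\,\phi(\bx;\by)$; here $x_t\sim D$ is averaged outside the minimax because the covariates are i.i.d., while the inner $\min_{q_t}$ \emph{before} $\max_{y_t}$ encodes that Nature may choose $y_t$ adaptively after seeing $x_t$ and the learner's move. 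By construction $W_0\leq 0$ is equivalent to the existence of a strategy certifying \eqref{eq:lec12:mistake_bound_iid_adaptive} for every adaptive Nature, with the minimizing $q_t$ being the algorithm. The key claim to verify by induction is the closed form
\[ W_t(x_{1:t};y_{1:t}) \;=\; \tfrac12(n-t)\;-\;\underbrace{n\,\En\big[\phi(x_{1:t},\tilde x_{t+1:n};y_{1:t},\varepsilon_{t+1:n})\big]}_{\deq\, G_t(x_{1:t};y_{1:t})}, \]
where the expectation is over i.i.d.\ ``playout'' covariates $\tilde x_{t+1:n}\sim D$ and independent Rademacher $\varepsilon_{t+1:n}$; the terminal case matches since $W_n=-G_n=-n\phi(\bx;\by)$.

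The inductive step reduces, after fixing $x_{1:t}$, to the scalar problem $\min_q\max_{\pm}$ with $g_+=G_t(x_{1:t};y_{1:t-1},+1)$ and $g_-=G_t(x_{1:t};y_{1:t-1},-1)$. Balancing the two branches gives the optimal mean $q_t^\ast=g_--g_+=n\,\En[\phi(\ldots,-1,\ldots)-\phi(\ldots,+1,\ldots)]$, i.e.\ the conditional analogue of \eqref{eq:algo_2} and exactly the strategy \eqref{eq:opt_rand_strategy_hybrid}; stability \eqref{eq:lec12:smoothness} guarantees $|g_+-g_-|\leq 1$, so $q_t^\ast\in[-1,1]$ is a legal mean. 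The balanced value equals $\tfrac12(n-t+1)-\tfrac12(g_++g_-)$, and $\tfrac12(g_++g_-)=\En_{\varepsilon_t}G_t(x_{1:t};y_{1:t-1},\varepsilon_t)$. The crucial and most delicate step is the outer $\En_{x_t\sim D}$: because the real covariate $x_t$ and the leading playout covariate $\tilde x_t$ are \emph{both} i.i.d.\ draws from $D$, averaging over $x_t$ reconstitutes exactly $G_{t-1}(x_{1:t-1};y_{1:t-1})$, closing the induction as $W_{t-1}=\tfrac12(n-t+1)-G_{t-1}$. Iterating down to $t=0$ gives $W_0=\tfrac{n}{2}-n\,\En\phi(\bx;\bepsilon)$, which is $\leq 0$ if and only if \eqref{eq:lec12:half_cond_phi} holds, completing sufficiency and reducing to Lemma~\ref{lem:phi_x_y} when Nature is oblivious.

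I expect the main obstacle to be precisely this identification of the stochastic covariate $x_t$ with the ``hallucinated'' playout covariate $\tilde x_t$: it is what allows the future covariates to be imagined while still permitting $y_t$ to depend on the realized $x_t$, and it relies essentially on the $x_t$ being i.i.d.\ with a known law. The label-adaptivity, by contrast, is absorbed for free by the order of operations in the minimax recursion, and the only secondary care needed is the measurability/independence argument in the necessity direction together with the check $q_t^\ast\in[-1,1]$, both immediate from stability.
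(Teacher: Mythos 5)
Your sufficiency argument is, up to normalization, the paper's own proof: your $W_t$ equals $n\,\Rel_t$ from the paper's recursion \eqref{eq:rel_t_adapt} and closed form \eqref{eq:rel_phi_t_induct}; you place $\En_{x_t}$ outside $\min_{q_t}\max_{y_t}$ exactly as in \eqref{eq:exnminmax} to encode that Nature sees $x_t$ and the learner's distribution before choosing $y_t$; you use the same equalization step, the same stability check that the balanced mean lies in $[-1,1]$, and the same identification of the realized $x_t$ with an i.i.d.\ hallucinated draw to close the induction. Your necessity direction (an oblivious Nature playing fresh Rademacher bits) is the standard counterpart of the paper's ``take $\by$ uniformly at random'' argument, and it is correct.

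There is, however, one concrete gap. The strategy your induction produces is the conditional-expectation strategy $q_t^* = n\,\En_{x_{t+1:n},\varepsilon_{t+1:n}}\left[\phi(\ldots,-1,\ldots)-\phi(\ldots,+1,\ldots)\right]$, i.e.\ the paper's \eqref{eq:lec12:qt_side_info}, and you assert this is ``exactly the strategy \eqref{eq:opt_rand_strategy_hybrid}.'' It is not: \eqref{eq:opt_rand_strategy_hybrid} is the doubly-randomized strategy in which $x_{t+1:n}$ and $\varepsilon_{t+1:n}$ are sampled once per round and plugged in, with no averaging. The distinction is the substance of the lemma's parenthetical claim, because \eqref{eq:lec12:qt_side_info} is not implementable (it requires knowing $P_X$ to compute the expectation), whereas \eqref{eq:opt_rand_strategy_hybrid} needs only sampling access to $P_X$ (e.g.\ unlabeled data). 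So your proof is missing the transfer of the guarantee from the averaged strategy to the sampled one against an \emph{adaptive} Nature. The paper does this by plugging the random-playout strategy into the minimax recursion and applying Jensen's inequality (the max over $y_t$ of an expectation is at most the expectation of the max), after which the equalization property holds pointwise for each realization of the hallucinated future. In the binary case one could instead argue by linearity of expectation that both strategies induce the same conditional law of $\pred_t$ given everything Nature observes---but this requires making explicit that Nature sees the learner's strategy and the visible history, not the realized hallucinated draws; if Nature could observe those internal draws the transfer would fail, and your write-up never addresses this point.
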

Inspecting the proof of Lemma~\ref{lem:adaptive_phi_x_y}, we see that the optimal doubly-randomized strategy is to draw $x_{t+1:n}$ and $\varepsilon_{t+1:n}$ and then set the mean of the distribution to be 
\begin{eqnarray}
	\label{eq:opt_rand_strategy_hybrid}
	&\widetilde{q}_t^*(\bx,y_{1:t-1},\varepsilon_{t+1:n}) = n\left[\phi(\bx;y_{1:t-1},-1,\varepsilon_{t+1:n})-\phi(\bx;y_{1:t-1},+1,\varepsilon_{t+1:n})\right].
\end{eqnarray}
Notice that the coordinates $x_{1:t}$ of $\bx$ are the actual observations, while $x_{t+1:n}$ are hallucinated. If $x$'s are i.i.d., these hypothetical observations are available, for instance, if one has access to a pool of unlabeled data. In fact, the statement of Lemma~\ref{lem:adaptive_phi_x_y} holds verbatim for any stochastic process governing evolution of $x$'s, as long as we can ``roll out the future'' according to this process.

Finally, we state one more extension of Cover's result, lifting any stochastic assumptions on the generation of the $x$'s.
\begin{lemma}
	\label{lem:adaptive_phi_x_y_fully_advers}
	Let $\phi:\X^n\times\{\pm1\}^n\to\reals$ be stable, and assume that both $x_t$ and $y_t$ are chosen by Nature adversarially and adaptively. Then existence of a strategy with 
	$$\En\left[ \frac{1}{n}\sum_{t=1}^n\ind{\pred_t\neq y_t}\right] \leq \En\phi(\bx;\by)$$
	is equivalent to
	\begin{align}
		\label{eq:con_phi_tree}
		\forall \bz,~~ \En_{\bepsilon} \phi(\bz_1,\bz_2(\varepsilon_1),\ldots,\bz_n(\varepsilon_{1:n-1}); \bepsilon) \geq \frac{1}{2},
	\end{align}
	where $\bz=(\bz_1,\ldots,\bz_n)$ is an $\X$-valued predictable process with respect to the dyadic filtration $\{\sigma(\varepsilon_1,\ldots,\varepsilon_t)\}_{t\geq 0}$ generated by i.i.d. Rademacher $\varepsilon_1,\ldots,\varepsilon_n$.
\end{lemma}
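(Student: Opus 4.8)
The plan is to read the claimed equivalence as an instance of the general minimax/sequential-complexity recipe, specialized so that the stability of $\phi$ makes every per-round optimization \emph{exact}. Writing the expected round-$t$ mistake in the mean parametrization as $\tfrac12(1-q_t y_t)$, where $q_t=\En\pred_t\in[-1,1]$, existence of a strategy with $\En[\tfrac1n\sum_t\ind{\pred_t\neq y_t}]\leq\En\phi(\bx;\by)$ against every adaptive Nature is, by definition, equivalent to the assertion that the game value
\[
V_n \;=\; \multiminimax{\sup_{x_t\in\X}\,\inf_{q_t\in[-1,1]}\,\sup_{p_t\in\Delta(\{\pm1\})}\,\Eu{y_t\sim p_t}}_{t=1}^{n}\left[\,\frac12-\frac{1}{2n}\sum_{t=1}^n q_t y_t-\phi(\bx;\by)\right]
\]
satisfies $V_n\leq 0$. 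The entire task thus reduces to evaluating $V_n$ and showing $V_n\leq 0$ if and only if \eqref{eq:con_phi_tree} holds.

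For necessity I would not evaluate $V_n$ at all, but exhibit one bad adversary. Suppose \eqref{eq:con_phi_tree} fails, so some $\X$-valued predictable process $\bz$ has $\Eu{\bepsilon}\phi(\bz_1,\bz_2(\varepsilon_1),\ldots;\bepsilon)<\tfrac12$. Let Nature set $x_t=\bz_t(y_{1:t-1})$ (a legitimate choice, made before $y_t$ from past labels) and draw each $y_t$ as an independent fair coin. Since $\pred_t$ is committed before the fresh coin $y_t$, every prediction is wrong with probability exactly $\tfrac12$, so any strategy incurs expected mistake rate exactly $\tfrac12$, whereas the right-hand side equals $\Eu{\bepsilon}\phi(\bz;\bepsilon)<\tfrac12$; the bound is violated. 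Hence \eqref{eq:con_phi_tree} is necessary.

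For sufficiency I would evaluate $V_n$ by two moves applied round by round from the inside out. First, at round $t$ the bracket is affine in $q_t$ and linear in $p_t$ over the convex compact sets $[-1,1]$ and $\Delta(\{\pm1\})$, so Sion's minimax theorem permits swapping $\inf_{q_t}$ and $\sup_{p_t}$. Second, with $\sup_{p_t}$ now outside, the inner $\inf_{q_t}$ sees $q_t$ only through the single term $-\tfrac{1}{2n}q_t y_t$; choosing $q_t^\ast=n\,[\phi(\ldots,-1,\ldots)-\phi(\ldots,+1,\ldots)]$ exactly as in \eqref{eq:opt_rand_strategy_hybrid}---which lies in $[-1,1]$ \emph{precisely} by the stability hypothesis \eqref{eq:lec12:smoothness}---cancels the drift, so the two outcomes $y_t=\pm1$ contribute equally and the worst $p_t$ is uniform. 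This is the same backward-induction identity that underlies Lemma~\ref{lem:cover}, now carried out with the extra outer $\sup_{x_t}$ left untouched. Iterating, each round replaces $\sup_{p_t}\Eu{y_t}$ by a Rademacher average $\Eu{\varepsilon_t}$ and leaves an outer $\sup_{x_t}$ whose argument is a function of the already-revealed $\varepsilon_{1:t-1}$; collecting these choices over all rounds assembles exactly a predictable $\X$-valued process $\bz$. Since the $q$-terms have cancelled, the terminal payoff is just $\tfrac12-\phi$ evaluated along the tree, giving
\[
V_n \;=\; \sup_{\bz}\left[\,\frac12-\Eu{\bepsilon}\,\phi\bigl(\bz_1,\bz_2(\varepsilon_1),\ldots,\bz_n(\varepsilon_{1:n-1});\bepsilon\bigr)\right],
\]
so that $V_n\leq 0$ is verbatim condition \eqref{eq:con_phi_tree}; the minimizing $q_t^\ast$ simultaneously furnishes the explicit strategy \eqref{eq:opt_rand_strategy_hybrid} (with future covariates rolled out along the worst continuation tree).

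The genuinely delicate points are (i) the legitimacy of the minimax swap under adaptivity---Nature here may depend on realized past predictions $\pred_{1:t-1}$, not merely on the distributions $q_{1:t-1}$, so one must argue, via the random-playout technique of \cite{rakhlin2012relax}, that this extra dependence does not change the value; and (ii) the bookkeeping that turns the nested suprema $\sup_{x_1},\sup_{x_2},\dots$ into a single supremum over predictable processes $\bz$, which is the structural heart of sequential Rademacher complexity. I expect (i) to be the main obstacle: it is exactly the subtlety already flagged for Lemma~\ref{lem:random_play_bin_multi}, and it is what forces the $x$-argument of $\phi$ in \eqref{eq:con_phi_tree} to be a \emph{tree} rather than a fixed sequence. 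The role of stability \eqref{eq:lec12:smoothness} is entirely to make step (ii)'s per-round optimization exact, collapsing the inner $\sup_{y_t}$ to an average; verifying $q_t^\ast\in[-1,1]$ under \eqref{eq:lec12:smoothness} is the one routine check that ties the argument together.
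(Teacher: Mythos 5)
Your proof is correct and is essentially the argument the paper intends: the paper in fact states Lemma~\ref{lem:adaptive_phi_x_y_fully_advers} without a written proof, and your sufficiency direction is exactly the natural modification of the paper's proof of Lemma~\ref{lem:adaptive_phi_x_y}---replace each $\En_{x_t}$ by $\sup_{x_t}$, use stability to justify the equalizer $q_t^*$ so that every $\max_{y_t}$ collapses to an average $\En_{\varepsilon_t}$, and note that the resulting nested $\sup_{x_t}\En_{\varepsilon_t}$ operators are precisely a supremum over predictable trees $\bz$---while your randomized bad-tree adversary for necessity is the covariate analogue of the paper's ``take $\by$ uniformly at random'' step in the proof of Lemma~\ref{lem:cover_multi}. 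The only substantive stylistic difference is your appeal to Sion's minimax theorem, which is redundant: plugging the equalizer $q_t^*$ (whose membership in $[-1,1]$ is exactly the stability condition) into the original order $\inf_{q_t}\sup_{p_t}$ already upper-bounds the value by the equalized, $y_t$-independent continuation, so the constructive backward induction---as in the paper---never needs a minimax swap, and exact evaluation of $V_n$ is never required since your necessity argument supplies the matching direction independently.
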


In Section~\ref{sec:basics}, we introduced a ``canonical'' way to define $\phi(\by)$ for the case of no side information. The analogous canonical definition that takes side information into account is as follows. Let $\F$ be a class of functions $\X\to\{\pm1\}$. If $\F$ is chosen well, one of the functions in this class will explain, approximately, the relationship between $x_t$'s and $y_t$'s. It is then natural to take the projection 
$$F|_{\bx} = \{(f(x_1),\ldots,f(x_n)): f\in\F\},$$
the set of vertices of the hypercube achieved by evaluating some $f\in\F$ on the given data. We may now define
$$\phi(\bx;\by) = \dham(\by, F|_{\bx}) + C_{n,F|_{\bx}},$$
as before. The function $\phi$ defined in this way is indeed small if $\by$ is close to the values of some $f\in\F$ on the data. 

It remains to give an expression for $C_{n,F|_{\bx}}$. For the i.i.d. side information case of Lemma~\ref{lem:adaptive_phi_x_y}, the condition $\En\phi(\bx;\bepsilon)\geq 1/2$ means that the smallest value of $C_{n,F|_{\bx}}$ ensuring achievability is
$$\frac{1}{2n}\En_{\bepsilon,\bx}\left[\sup_{f\in\F} \sum_{t=1}^n \varepsilon_t f(x_t)\right],$$
the Rademacher averages of $\F$. For the adversarial case of Lemma~\ref{lem:adaptive_phi_x_y_fully_advers}, condition \eqref{eq:con_phi_tree} means that the smallest value of $C_{n,F|_{\bx}}$ is
$$\sup_{\bz} \Rad^{\text{seq}}(\F, \bz),$$
where 
$$\Rad^{\text{seq}}(\F, \bz) \deq \frac{1}{2n}\En_{\bepsilon}\left[\sup_{f\in\F} \sum_{t=1}^n \varepsilon_t f(\bz_t(\varepsilon_{1:t-1}))\right]$$
is the \emph{sequential} Rademacher complexity \cite{rakhlin2015sequential}.

\section{Discussion and Research Directions}

The prediction results discussed in this tutorial hold for arbitrary sequences -- even for those chosen adversarially and adaptively in response to forecaster's past predictions. Treating the prediction problem as a multi-stage game against Nature has been very fruitful, both for the theoretical analysis and for the algorithmic development. Even though we discuss maliciously chosen sequences, it is certainly not our aim to paint any prediction problem as adversarial. Rather, we view the ``individual sequence'' results as being \emph{robust} and applicable in situations when modeling the underlying stochastic process is difficult. For instance, one may try to model the node prediction problem described in Section~\ref{sec:community} probabilistically---e.g. as a Stochastic Block Model---but such a model is unlikely to be true in the real world. Of course, the ultimate test is how the two approaches perform on real data. In the node classification example, the methods discussed in this tutorial performed very well in our own experiments, often surpassing the performance of more classical machine learning methods. Perhaps it is worth emphasizing that the prediction algorithms developed here are very distinct from these classical methods, and, if anything, this tutorial serves the purpose of enlarging the algorithmic toolkit.

We presented some very basic ideas, only scratching the surface of what is possible. Among some of the most interesting (to us) and promising research directions are:
\begin{itemize}
	\item Develop linear or sublinear time methods for solving prediction problems on large-scale graphs. 
	\item Run more experiments on real-world data and explore the types of functions $\phi$ that lead to good prediction performance. 	
	\item Develop partial-information versions of the problem. Some initial steps for contextual bandits were taken in \cite{rakhlin2016bistro,SyrgkanisLKS16}.
	\item Analyze the setting of constrained sequences. That is, develop methods when Nature is not fully adversarial, yet also not i.i.d. 
	\item Develop efficient prediction methods that go beyond i.i.d. covariates.
\end{itemize}
For more additional questions or clarifications, please feel free to email us.

\appendix
\section{Proofs}
\begin{proof}[\textbf{Proof of Lemma~\ref{lem:cover_multi}}]
	Define functions $\Rel_t:[k]^t\to\reals$ as
	$$\Rel_n(y_1,\ldots,y_n) = -\phi(y_1,\ldots,y_n)$$
	and
	\begin{align}
		\label{eq:rel_t}
		\Rel_{t-1}(y_1,\ldots,y_{t-1}) = \En_{y_t\sim \text{Unif}[k]}\Rel_{t}(y_1,\ldots,y_{t}) +  \frac{1}{n}\left(1-\frac{1}{k}\right),
	\end{align}
	with $\Rel_{0}(\emptyset)$ being a constant.
	We desire to prove that there is an algorithm such that 
	$$\forall \by\in[k]^n,~~~~ \En\left[ \frac{1}{n}\sum_{t=1}^n \ind{\pred_t\neq y_t} \right] -  \phi(y_1,\ldots,y_n) = 0.$$
	Consider the last time step $n$ and write the above expression as
	\begin{align}
		\label{eq:minm0}
		\En\left[ \frac{1}{n}\sum_{t=1}^{n-1} \ind{\pred_t\neq y_t} + \frac{1}{n}\ind{\pred_n\neq y_n} +  \Rel_n(y_1,\ldots,y_n) \right].
	\end{align}
	Let $\En_{n-1}$ denote the conditional expectation given $\pred_1,\ldots,\pred_{n-1}$. We shall prove that there exists a randomized strategy for the last step such that for any $y_n\in [k]$,
	\begin{align}
		\En_{n-1}\left[\frac{1}{n}\ind{\pred_n\neq y_n} \right] +  \Rel_n(y_1,\ldots,y_n)  = \Rel_{n-1} (y_1,\ldots,y_{n-1}).
	\end{align}
	This last statement is translated as 
	\begin{align}
		\label{eq:minm1}
		\min_{q_n\in\Delta_k}\max_{y_n\in[k]} \left\{ \En_{n-1}\left[\frac{1}{n}\ind{\pred_n\neq y_n}\right] +  \Rel_n(y_1,\ldots,y_n) \right\} = \Rel_{n-1} (y_1,\ldots,y_{n-1}).
	\end{align}
	Writing $\ind{\pred_n\neq y_n}=1-\boldsymbol{e}_{\pred_n}^\tr \boldsymbol{e}_{y_n}$, the left-hand side of \eqref{eq:minm1} is 
	\begin{align}
		\label{eq:minm2}
		\frac{1}{n} \min_{q_n\in\Delta_k}\max_{y_n\in[k]} \left\{ 1-q_n^\tr \boldsymbol{e}_{y_n} +  n\Rel_n(y_1,\ldots,y_n) \right\}.
	\end{align}
	The stability condition \eqref{eq:stab_multiclass} means that we can choose $q_n$ to \emph{equalize} the choices of $y_n$. Let $\psi(1),\ldots,\psi(k)$ be the sorted values of $$n\Rel_n(y_1,\ldots,y_{n-1},1),\ldots,n\Rel_n(y_1,\ldots,y_{n-1},k),$$ in non-increasing order. In view of the stability condition,
	$$\sum_{i=1}^k (\psi(i)-\psi(k)) \leq 1.$$
	Hence, $q_n$ can be chosen so that all
	$\psi(i)-q_n(i)$
	have the same value (see Figure~\ref{fig:graphics_water-filling-proof}). One can check that this is the minimizing choice for $q_n$.
	\begin{figure}[t]
	  \centering
	    \includegraphics[width=.4\textwidth]{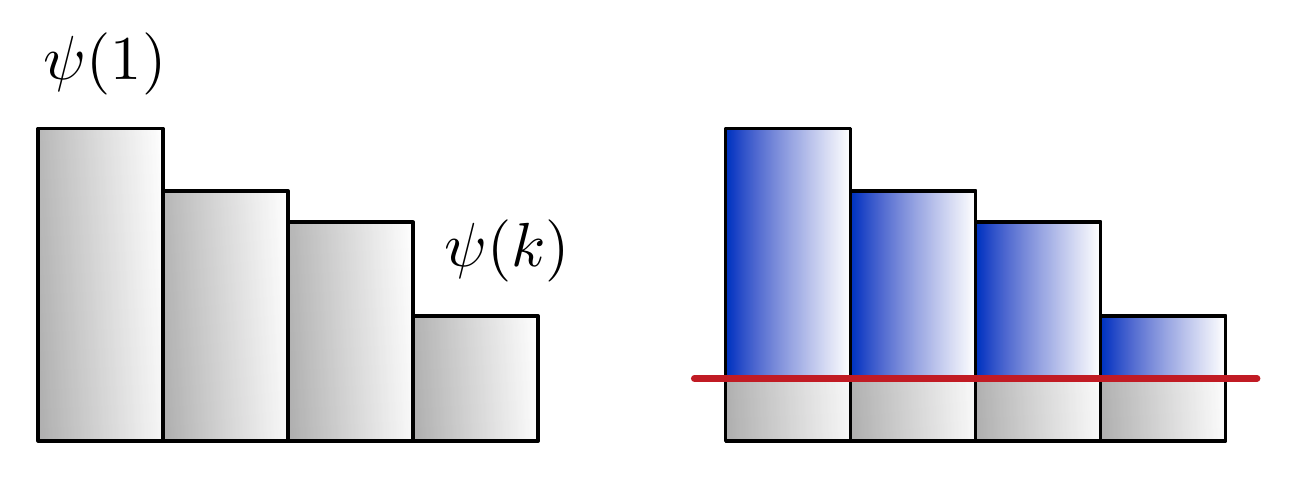}
	  \caption{Under the stability condition, water-filling is optimal.}
	  \label{fig:graphics_water-filling-proof}
	\end{figure}
	Let $q_n^*$ denote this optimal choice. The common value of $\psi(i)-q_n^*(i)$ can then be written as
	$$\psi(k) - \frac{1}{k}\left(1-\sum_{i=1}^k (\psi(i)-\psi(k))\right) = \frac{1}{k}\sum_{i=1}^k \psi(i) - \frac{1}{k}$$
	and hence \eqref{eq:minm2} is equal to
	\begin{align}
		\frac{1}{n}\left(1-\frac{1}{k}\right) + \frac{1}{k}\sum_{i=1}^k \Rel_n(y_1,\ldots,y_{n-1},i).
	\end{align}
	This value is precisely $\Rel_{n-1}(y_1,\ldots,y_{n-1})$, as per Eq.~\eqref{eq:rel_t}, thus verifying \eqref{eq:minm1}. Repeating the argument for $t=n-1$ until $t=0$, we find that 
	$$\Rel_0(\emptyset) = -\En\phi + \left(1-\frac{1}{k}\right) =0,$$
	thus ensuring existence of an algorithm with \eqref{eq:minm0} equal to zero. The other direction of the statement is proved by taking sequences $\by$ uniformly at random from $[k]^n$, concluding the proof.
	
	When $k=2$, the solution $q_t^*$ takes on a simpler form
	$$q_t^*(y_1,\ldots,y_{t-1}) = n[ \En\phi(y_1,\ldots,y_{t-1},-1,\varepsilon_{t+1},\ldots,\varepsilon_n)- \En\phi(y_1,\ldots,y_{t-1},+1,\varepsilon_{t+1},\ldots,\varepsilon_n)],$$
	which is found by equating the two alternatives in \eqref{eq:minm2}.
\end{proof}

\begin{proof}[\textbf{Proof of Lemma~\ref{lem:adaptive_phi_x_y}}]
	As in the proof of Lemma~\ref{lem:cover_multi}, define functions $\Rel_t:(\X \times \{\pm1\})^t\to\reals$ as
	$$\Rel_n(x_{1:n}; y_{1:n}) = -\phi(x_{1:n};y_{1:n})$$
	and
	\begin{align}
		\label{eq:rel_t_adapt}
		\Rel_{t-1}(x_{1:t-1}; y_{1:t-1}) = \En_{\varepsilon_t,x_t} \Rel_{t}(x_{1:t-1},x_t; y_{1:t-1},\varepsilon_t) +  \frac{1}{2n},
	\end{align}
	with $\Rel_{0}(\emptyset)$ being a constant.
 Having observed $x_{1:n-1},y_{1:n-1}$ and $x_n$ at the present time step, we solve
\begin{align}
	\label{eq:lec11:bit_bound_phi_last_with_x}
	\min_{q_n}\max_{y_n}\left\{ \En\left[ \frac{1}{n} \ind{\pred_n\neq y_n} \right] + \Rel_n(x_{1:n},y_{1:n}) \right\}
\end{align}
The same steps as in Lemma~\ref{lem:cover_multi} (for binary prediction) lead to the solution
\begin{align}
	q^*_n(x_{1:n},y_{1:n-1}) = n[\phi(x_{1:n},y_{1:n-1},-1)-\phi(x_{1:n},y_{1:n-1},+1)].
\end{align}
We remark that $q^*_n$ depends on $x_n$, as given by the protocol of the problem. Then \eqref{eq:lec11:bit_bound_phi_last_with_x} equals to
\begin{align}
	\En_{\varepsilon_n}\Rel_n(x_{1:n}; y_{1:n-1},\varepsilon_n) + \frac{1}{2n}
\end{align}
We now take expectation over $x_n$ on both sides:
\begin{align}
	\label{eq:exnminmax}
	\En_{x_n}\min_{q_n}\max_{y_n}\left\{ \En\left[ \frac{1}{n}\sum_{t=1}^n \ind{\pred_t\neq y_t} \right] + \Rel_n(x_{1:n}; y_{1:n}) \right\} 
	=\Rel_{n-1}(x_{1:n-1};y_{1:n-1})
\end{align}
The argument continues back to $t=0$, with
\begin{align}
	\label{eq:rel_phi_t_induct}
	\Rel_t(x_{1:t}; y_{1:t}) = -\En_{x_{t+1:n},\varepsilon_{t+1:n}} \phi(x_{1:n}; y_{1:t},\varepsilon_{t+1:n})+\frac{n-t}{2n}
\end{align}
and
\begin{align}
	\label{eq:lec12:qt_side_info}
	&q^*_t(x_{1:t},y_{1:t-1}) = n\En_{x_{t+1:n},\varepsilon_{t+1:n}}\left[\phi(x_{1:n},y_{1:t-1},-1,\varepsilon_{t+1:n})-\phi(x_{1:n},y_{1:t-1},+1,\varepsilon_{t+1:n})\right].
\end{align}
Finally, 
$$\En\phi(x_{1:n};\varepsilon_{1:n}) = \frac{1}{2}$$
means that $\Rel_0(\emptyset)= 0$.
The algorithm  in \eqref{eq:lec12:qt_side_info} is not implementable: it requires the knowledge of $P_X$. However, all we need is to be able to sample $x_{t+1:n}\sim P_X$ (or have access to unlabeled data), draw independent Rademacher $\varepsilon_{t+1:n}$, and define
\begin{align}
	\label{eq:lec12:random_playout_gen_phi}
	&\widetilde{q}_t^*(x_{1:n},y_{1:t-1},\varepsilon_{t+1:n}) = n\left[\phi(x_{1:n},y_{1:t-1},-1,\varepsilon_{t+1:n})-\phi(x_{1:n},y_{1:t-1},+1,\varepsilon_{t+1:n})\right].
\end{align}
The proof that this strategy yields the same expected mistake bound against adaptive Nature relies on the technique we term random playout. In this case the proof is not difficult. Consider \eqref{eq:lec11:bit_bound_phi_last_with_x} at step $t$ and use the inductive definition of $\Rel_t$ in \eqref{eq:rel_phi_t_induct}:
\begin{align}
	\label{eq:bit_bound_phi_last_with_x_plugin}
	\min_{q_t}\max_{y_t}\left\{ \En\left[ \frac{1}{n} \ind{\pred_t\neq y_t} \right]  -\En_{x_{t+1:n},\varepsilon_{t+1:n}} \phi(x_{1:n}; y_{1:t},\varepsilon_{t+1:n})+\frac{n-t}{2n} \right\}
\end{align}
In the above minimum over $q_t$, let us choose the randomized strategy with the mean given by \eqref{eq:lec12:random_playout_gen_phi}, thus passing to an upper bound of 
\begin{align}
	&\max_{y_t}\left\{ \En_{x_{t+1:n},\varepsilon_{t+1:n}}\En_{\pred_t\sim \widetilde{q}_t^*}\left[ \frac{1}{n} \ind{\pred_t\neq y_t} \right]  -\En_{x_{t+1:n},\varepsilon_{t+1:n}} \phi(x_{1:n}; y_{1:t},\varepsilon_{t+1:n}) \right\} +\frac{n-t}{2n} 
\end{align}
which, in turn, is upper bounded via Jensen's inequality by
\begin{align}
	&\En_{x_{t+1:n},\varepsilon_{t+1:n}} \max_{y_t}\left\{ \En_{\pred_t\sim \widetilde{q}_t^*}\left[ \frac{1}{n} \ind{\pred_t\neq y_t} \right]  - \phi(x_{1:n}; y_{1:t},\varepsilon_{t+1:n})\right\} +\frac{n-t}{2n} .
\end{align}
The choice of $\widetilde{q}_t^*$ makes the two possibilities for $y_t\{\pm1\}$ identical in terms of their value (such a strategy is called an \emph{equalizer}) and is optimal. With routine algebra, the above expression is equal to 
\begin{align}
	&\En_{x_{t+1:n},\varepsilon_{t+1:n}}  \En_{\varepsilon_t} \phi(x_{1:n}; y_{1:t-1},\varepsilon_t,\varepsilon_{t+1:n}) +\frac{n-t+1}{2n}.
\end{align}
Taking expectation with respect to $x_t$ yields $\Rel_{t-1}(x_{1:t-1}; y_{1:t-1})$, completing the recursion step. Moreover, because of this value is optimal, so is the randomized strategy and, hence, the inequality in the above proof is an equality.

Regarding the required stability condition on $\phi$, we see that it is simply that \eqref{eq:lec12:qt_side_info} is within the range $[-1,1]$. In particular, it is implied by the assumed stability condition.

Finally, because of the order of expectation, minimum, and maximum in \eqref{eq:exnminmax}, the choice of $y_t$ for Nature may depend on $x_t$ (which is drawn from an unknown distribution), on $q_t$ (but not on $\pred_t$), and on the history. This ensures that the prediction strategy works against an adaptive Nature.

\end{proof}

\bibliography{paper}
\bibliographystyle{alpha}

\end{document}